\newcommand{\bb}{\mathbb}
\newcommand{\R}{\bb R}
\newcommand{\N}{{\bb N}}
\newcommand{\LT}{\textup{LT}_{n}(k)}
\newcommand{\LTw}{\textup{LT}^{w}_{n}(k)}
\DeclareMathOperator{\comp}{comp}
\DeclareMathOperator{\full}{full}
\DeclareMathOperator{\PWC}{PWC}
\DeclareMathOperator{\PWL}{PWL}
\DeclareMathOperator{\CPWL}{CPWL}
\DeclareMathOperator{\SLT}{SLT}
\DeclareMathOperator{\ReLU}{ReLU}
\newtheorem{theorem}{Theorem}
\newtheorem{definition}{Definition}
\newtheorem{proposition}{Proposition}
\newtheorem{corollary}{Corollary}
\newtheorem{lemma}{Lemma}
\newtheorem{example}{Example}
\newtheorem{conjecture}{Conjecture}
\newtheorem{remark}{Remark}
\begin{document}
\title[Neural networks with linear threshold activations]{Neural networks with linear threshold activations: structure and algorithms}

\author{Sammy Khalife}
\address{Johns Hopkins University, Department of Applied Mathematics and Statistics}
\curraddr{}
\email{khalife.sammmy@jhu.edu}

\author{Hongyu Cheng}
\address{}
\curraddr{}
\email{hongyucheng@jhu.edu}

\author{Amitabh Basu}
\address{}
\curraddr{}
\email{basu.amitabh@jhu.edu}
\subjclass[2020]{68T07, 68Q19, 05D10, 11J85}
\maketitle

\begin{abstract} 
In this article we present new results on neural networks with linear threshold 
activation functions $x \mapsto \mathbbm{1}_{\{ x > 0\}}$.
We precisely characterize the class of functions that are representable by such neural networks and show that 2 hidden layers are necessary and sufficient to represent any function representable in the class. This is a surprising result in the light of recent exact representability investigations for neural networks using other popular activation functions like rectified linear units (ReLU). We also give upper and lower bounds on the sizes of the neural networks required to represent any function in the class. Finally, we design an algorithm to solve the {\em empirical risk minimization (ERM)} problem to global optimality for these neural networks with a fixed architecture. The algorithm's running time is polynomial in the size of the data sample, if the input dimension and the size of the network architecture are considered fixed constants. The algorithm is unique in the sense that it works for any architecture with any number of layers, whereas previous polynomial time globally optimal algorithms work only for restricted classes of architectures. Using these insights, we propose a new class of neural networks that we call {\em shortcut linear threshold neural networks}. To the best of our knowledge, this way of designing neural networks has not been explored before in the literature. We show that these neural networks have several desirable theoretical properties.
\end{abstract}





%
%
%
%
%
%

\section{Introduction}
A basic question in a rigorous study of neural networks is a precise characterization of the class of functions representable by neural networks with a certain activation function. 
The question is of fundamental importance because neural network functions are a popular hypothesis class in machine learning and artificial intelligence. Every aspect of learning using neural networks benefits from a better understanding of the function class: from the statistical aspect of understanding the {\em bias} introduced in the learning procedure by using a particular neural hypothesis class, to the algorithmic question of training, i.e., finding the ``best" function in the class that extrapolates the given sample of data points. 

It may seem that the universal approximation theorems for neural networks render this question less relevant, especially since these results apply to a broad class of activation functions~\cite{hornik1991approximation,cybenko1989approximation,anthony1999neural}. We wish to argue otherwise. Knowledge of the finer structure of the function class obtained by using a particular activation function can be exploited advantageously. For example, the choice of a certain activation function may lead to much smaller networks that achieve the same bias compared to the hypothesis class given by another activation function, even though the universal approximation theorems guarantee that asymptotically both activation functions achieve arbitrarily small bias. As another example, one can design targeted training algorithms for neural networks with a particular activation function if the structure of the function class is better understood, as opposed to using a generic algorithm like some variant of (stochastic) gradient descent. This has recently led to globally optimal empirical risk minimization algorithms for {\em rectified linear units (ReLU)} neural networks with specific architecture~\cite{arora2018understanding,boob2020complexity,dey2020approximation} that are very different in nature from conventional approaches like (stochastic) gradient descent; see also~\cite{goel2017reliably,goel2018learning,goel2019learning,GKMR21,manurangsi2018computational,froese2021computational,goel2020tight}.

 
Recent results of this nature have been obtained with ReLU neural networks. Any neural network with ReLU activations clearly gives a piecewise linear function. Conversely, any piecewise linear function $\mathbb{R}^{n} \rightarrow \mathbb{R}$ can be \textit{exactly} represented with at most $\lceil \log_{2}(n+1)  \rceil$ hidden layers \cite{arora2018understanding}, thus characterizing the function class representable using ReLU activations. However, it remains an open question if $\lceil \log_{2}(n+1)  \rceil$ are indeed needed. It is conceivable that all piecewise linear functions can be represented by 2 or 3 hidden layers. It is believed this is not the case and there is a strict hierarchy starting from 1 hidden layer, all the way to $\lceil \log_{2}(n+1)  \rceil$ hidden layers. It is known that there are functions representable using 2 hidden layers that cannot be represented with a single hidden layer, but even the 2 versus 3 hidden layer question remains open. Some partial progress on this question can be found in~\cite{hertrich2021towards}.

In this paper, we study the class of functions representable using {\em threshold activations} (also known as the Heaviside activation, unit step activation, and McCulloch-Pitts neurons).
It is easy to see that any function represented by such a neural network is a piecewise constant function. We show that {\em any} piecewise constant function can be represented by such a neural network, and surprisingly -- contrary to what is believed to be true for ReLU activations -- there is always a neural network with at most 2 hidden layers that does the job. We also establish that there are functions that cannot be represented by a single hidden layer and thus one cannot do better than 2 hidden layers in general. Our constructions also show that the size of the neural network is polynomial in the number of ``pieces" of the function, in the case of fixed input dimension, similar to recent results for ReLU activations which give a polynomial size network~\cite{hertrich2021towards}. However, the degree of the polynomial in our results is linear in the dimension, compared to a quadratic dependence in the ReLU results. We also have tighter lower bounds on the size, compared to known results for ReLU networks. Moreover, we show that if we are allowed to ignore zero measure sets, the size bounds are only quadratic in the number of ``pieces", even for varying input dimension. Finally, we use these insights to design an algorithm to solve the empirical risk minimization (training) problem for these neural networks to global optimality whose running time is polynomial in the size of the data sample, assuming the input dimension and the network architecture are fixed. To the best of our knowledge, this is the first globally optimal training algorithm for any family of neural networks that works for arbitrary architectures and has computational complexity that is polynomial in the number of data points, that does not involve a discretization of parameter space or the input space. 

Linear threshold activations only retain the sign from the input (after applying an affine linear function). We now show a way to reintegrate additional input information to enhance the expressivity of the linear threshold neural networks, while maintaining  similar network sizes. For this purpose, we introduce a novel type of neural network named {\em shortcut linear threshold neural networks}. These networks are distinguished by a shortcut connection that performs a linear transform on the input, and takes the inner product with the output of the last hidden layer (see Figure \ref{fig:shortcut}). This structure enables a shift from piecewise constant to piecewise linear functions, without necessitating a change in the network's size. The novelty resides in the coupling of the network's input with the output of the last hidden layer through an inner product operation. This could be a potentially new way to design and apply neural networks which does not seem to have been explored in the literature before, to the best of our knowledge.

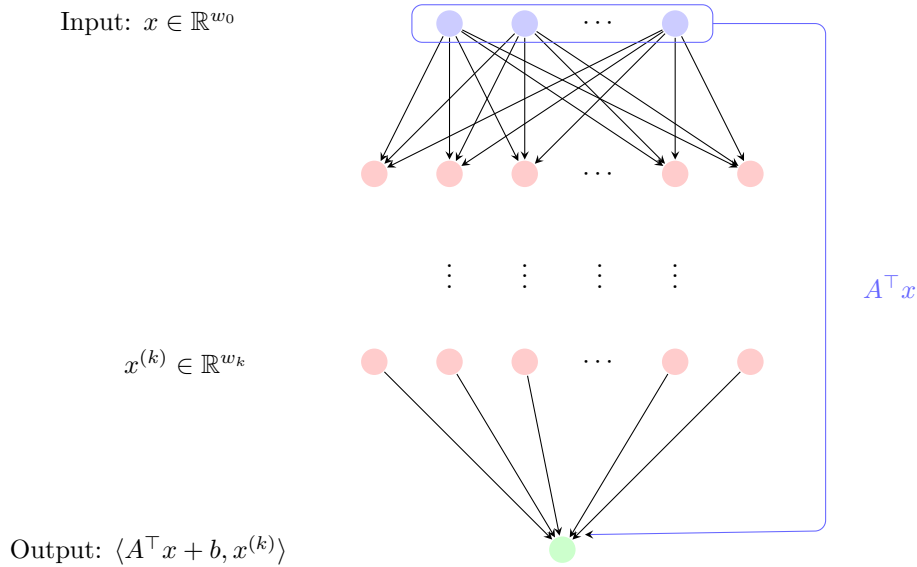
\begin{figure}[htbp] 
    \centering
	\begin{tikzpicture}
		[InputNode/.style={circle,fill=blue!20,thick, minimum size=10pt},
		HiddenNode/.style={circle,fill=red!20,thick, minimum size=10pt},
        OutputNode/.style={circle,fill=green!20,thick, minimum size=10pt}]

		\node[]()at(-1,0){Input: $x \in \mathbb{R}^{w_0}$};
		\node[]()at(-0.5,-4.5){$x^{(k)} \in \mathbb{R}^{w_k}$};
		\node[]()at(-1,-7){Output: $\langle A^\top x + b , x^{(k)} \rangle$};

		\foreach \i in {3,...,4}  
		\node[InputNode] (a_\i) at (\i,0){};  
        \node[](cdots) at (5,0) {$\cdots$};
        \node[InputNode] (a_5) at (6,0){};

		\foreach \i in {2,...,4}
		\node[HiddenNode] (b_\i) at (\i,-2){};
		\node[](cdots) at (5,-2) {$\cdots$};
		\node[HiddenNode] (b_5) at (6,-2){};
		\node[HiddenNode] (b_6) at (7,-2){};

		\foreach \i in {3,...,5}
			{\foreach \j in {2,...,6}
			\draw[-stealth] (a_\i)--(b_\j);}

		\node[](vdots) at (3,-3.25) {$\vdots$};
		\node[](vdots) at (4,-3.25) {$\vdots$};
		\node[](vdots) at (5,-3.25) {$\vdots$};
		\node[](vdots) at (6,-3.25) {$\vdots$};

		\foreach \i in {2,...,4}
			\node[HiddenNode] (c_\i) at (\i,-4.5){};
			\node[](cdots) at (5,-4.5) {$\cdots$};
			\node[HiddenNode] (c_5) at (6,-4.5){};
			\node[HiddenNode] (c_6) at (7,-4.5){};
		
		\node[OutputNode] (output) at (4.5,-7){};

		\foreach \i in {2,...,6}
		\draw[-stealth] (c_\i)--(output);
		
		\draw[rounded corners, draw=blue!60] (2.5,0.25) rectangle (6.5,-0.25);
		\draw[-stealth,draw=blue!60,rounded corners] (6.5,0)--(8,0)--(8,-6.75)--(4.8,-6.8);
		\node[]() at (8.5,-3.5) {\textcolor{blue!60}{$\qquad A^\top x$}};
	\end{tikzpicture}
	\caption{Illustration of a $\R^{w_0} \rightarrow \R$ shortcut linear threshold neural networks with $k$ hidden layers, where $x \in \R^{w_0}$ is the input to the network, $x^{(k)} \in \R^{w_k}$ is the output of the $k$-th hidden layer. }
	\label{fig:shortcut}
\end{figure}

These shortcut linear threshold networks can represent piecewise linear functions that are possibly discontinuous. This is a strict superset of the family of functions representable by ReLU neural networks that give {\em continuous} piecewise linear functions. Nevertheless, we show that the model complexity of this new network is not significantly larger than ReLU networks  and
these new neural networks can be trained provably to global optimality using the ERM algorithm we develop for linear threshold activations. For a more comprehensive discussion on this topic, we direct readers to Section~\ref{sec:5.1.2}. These results provide some evidence that shortcut linear threshold networks could be a superior class compared to ReLU networks. While the results we present are all theoretical in nature, we believe they provide reasonable motivation to explore the potential of this new class of neural networks in applications. We leave this avenue open for future work.



\section{Formal statement of results}

We first introduce necessary definitions and notation, followed by a precise statement of our results.

\subsection{Definitions and notations}

\subsubsection{Polyhedral theory}
\begin{definition} A {\em polyhedral complex} $\mathcal{P}$ is a collection of polyhedra having the following properties: 
\begin{itemize}
    \item[(A)] For every $P, P' \in \mathcal{P}, P \cap P' $ is a common face of $P$ and $P'$.
    \item[(B)] every face of a polyhedron in $\mathcal{P}$ belongs to $\mathcal{P}$. 
\end{itemize}
\end{definition}




We denote by $\dim(P)$ the dimension of a polyhedron and by $\mathring{P}$ the relative interior of $P$. $|\mathcal{P}|$ will denote the number of polyhedra in a polyhedral complex $\mathcal{P}$ and is called the {\em size of $\mathcal{P}$}. The set of full-dimensional polyhedra in $\mathcal{P}$ is denoted by $\full(\mathcal{P})$, and thus, $|\full(\mathcal{P})|$ corresponds to the number of full-dimensional polyhedra in $\mathcal{P}$. 


\begin{definition}[Piecewise linear and piecewise constant functions]
    We say that a function $f: \R^n \rightarrow \R$ is {piecewise linear} if there exists a finite polyhedral complex that covers $\R^n$ and $f$ is affine linear in the relative interior of each polyhedron in the complex. If each of the affine functions are constant functions, i.e., $f$ is constant in the relative interior of each polyhedron, then we call such a function {\em piecewise constant}. We use PWL$_n$ and $\text{PWC}_n$ to denote the class of all piecewise linear functions and piecewise constant functions (respectively) from $\R^n$ to $\R$; thus, $\text{PWC}_n\subseteq \text{PWL}_n$.  We will also use CPWL$_n$ to denote subclass of piecewise linear functions from $\R^n$ to $\R$ that are also continuous.
\end{definition}


\begin{definition}
Let $f \in \textup{PWL}_n$. We say that $x \in \mathbb{R}^{n}$ is a {\em regular point} for $f$ if there exists $\epsilon > 0$ such that $f$ is affine linear on the ball centered at $x$ and radius $\epsilon$. A point that is not regular is called a {\em breakpoint} for $f$.
\end{definition}

Note that there may be multiple polyhedral complexes that correspond to a given piecewise linear or constant function, with possibly different sizes. For example, the indicator function of the nonnegative orthant $\R^n_+$ is a piecewise constant function but there are many different ways to break up the complement of the nonnegative orthant into polyhedral regions. This motivates the following definitions.

\begin{definition} We say that a polyhedral complex {\em $\mathcal{P}$ is compatible with a piecewise linear or constant function $f$} if $f$ is linear or constant (respectively) in the relative interior of every polyedron in $\mathcal{P}$. Moreover, for any function $f\in \text{PWL}_n$,  $\comp(f)$ refers to the set of all polyhedral complexes compatible with $f$. We denote $\mathcal{P}^*_f$ as a polyhedral complex with the smallest cardinality in $\comp(f)$, that is, $\mathcal{P}^*_f \in \arg\min_{\mathcal{P} \in \comp(f)}|\mathcal{P}|$. Consequently, {\em the number of pieces of $f$}, denoted as $|f|$, is defined as $|\full(\mathcal{P}^*_f)|$, which corresponds to the number of full-dimensional polyhedra in $\mathcal{P}^*_f$.
\end{definition}



{\subsubsection{Neural network terminology}}
\begin{definition}[Neural networks (NN)]\label{def:DNN}  Fix an {\em activation function} $\sigma: \R \to \R$. For any number of hidden layers $k \in \mathbb{N}$, input and output dimensions $w_0$, $w_{k+1} \in \mathbb{N}$, a $\mathbb{R}^{w_0} \rightarrow \mathbb{R}^{w_{k+1}}$ {\em neural network (NN) with $\sigma$ activation} is given by specifying a sequence of $k$ natural numbers $w_1, w_2, \cdots, w_k$ representing widths of the hidden layers and a set of $k+1$ affine transformations $T_i:
\mathbb{R}^{w_{i-1}} \rightarrow \mathbb{R}^{w_{i}}$, $i=1, \ldots, k+1$. Such a NN is called a $(k+1)$-layer NN, and is said to have $k$ hidden layers. The function $f: \mathbb{R}^{w_0} \rightarrow \mathbb{R}^{w_{k+1}}$ computed or represented by this NN is:
$$f= T_{k+1}\circ \sigma \circ T_{k} \circ \cdots  T_2 \circ \sigma \circ T_1.$$ 
If $T_i$ is represented by the matrix $A^i \in \R^{w_i\times w_{i-1}}$ and vector $b^i \in \R^{w_i}$, i.e., $T_i(x) = A^ix + b^i$, then the {\em weights of neuron} $j \in \{1, \ldots, w_{i}\}$ in the $i$-th hidden layer are given by the entries of the $j$-th row of $A^i$ and the {\em bias} of this neuron is given by the $j$-th coordinate of $b^i$. The set of all weights and biases of all neurons is called the set of {\em learning parameters} of the NN, and the {\em size} of the NN, or the number of neurons in the NN, is $w_1 + \cdots + w_k $.
\end{definition}

\begin{definition} The {\em threshold activation function} is a map from $\R$ to $\{0,1\}$ given by the indicator of the positive reals, i.e., $x > 0$. By extending this to apply coordinatewise, we get a function $\sigma: \R^d \to \{0,1\}^d$ for any $d \geq 1$, i.e., $\sigma(x)_i$ is 1 if and only if $x_i > 0$ for $i=1, \ldots, d$. For any subset $X \subseteq \R^n$, $\mathbbm{1}_X$ will denote its {\em indicator function}, i.e., $\mathbbm{1}_X(y) = 1$ if $y \in X$ and $0$ otherwise.
\end{definition}

\begin{definition}[Threshold and ReLU activations]\label{def:linear-threshold-ReLU-DNNs}  {\em Linear threshold neural networks} are those NNs where $\sigma$ is the threshold activation function defined above. For natural numbers $n,k$ and a tuple $w = (w_1, \ldots, w_k)$, we use $\LTw$ to denote the family of all possible linear threshold NNs with input dimension $w_0 = n$, $k$ hidden layers with widths $w_1, \ldots, w_k$ and output dimension $w_{k+1} = 1$. $\LT:= \bigcup_{w=(w_1, \ldots, w_k)} \LTw $ will denote the family of all linear threshold activation neural networks with $k$ hidden layers.

{\em ReLU neural networks} are those NNs where $\sigma(x) = \max\{0,x\}$, which is called the {\em Rectified Linear Unit (ReLU)} activation function. Analogous to the notation for linear threshold functions, we introduce ReLU$_n^w(k)$ and ReLU$_n(k)$ for ReLU neural networks.
\end{definition}

We rigorously define Shortcut Linear Threshold Neural Networks (SLT NNs) as follows: Given a $\R^{w_0} \rightarrow \R$ linear threshold NN with $k \in \mathbb{N}$ hidden layers and input $x \in \R^{w_0}$, the network's output is constructed as a linear combination of the outputs of the neurons in the final hidden layer. Specifically, the output equals $\langle b , x^{(k)} \rangle$, where $x^{(k)} = [\mathbbm{1}_{X_1}(x),\dots,\mathbbm{1}_{X_{w_k}}(x)]^\top$ represents the output of the $k-$th layer, and $b \in \R^{w_k}$. Distinctively, instead of employing a constant vector $b$ for the ultimate linear combination, we utilize an affine linear transformation of the original input $x$ as the linear coefficients. Hence, the output of the shortcut linear threshold NN is defined as $\langle A^\top x+b , x^{(k)} \rangle$, where $A \in \R^{w_0\times w_k}$ and $b \in \R^{w_k}$. Notably, by setting $A=0$, we revert to linear threshold NNs as defined in Definition~\ref{def:linear-threshold-ReLU-DNNs}.

Analogously to linear threshold and ReLU NNs, we denote the class of functions represented by shortcut linear threshold NNs with $w_0 = n$, $w_{k+1}=1$, $k$ hidden layers, and $w=(w_1, \ldots, w_k)$ signifying the widths of the hidden layers as SLT$_n^w(k)$ and SLT$_n(k)$.

\begin{definition}[Shortcut linear threshold NNs]\label{def:SLT} We define shortcut linear threshold NNs as a type of linear threshold NNs with a shortcut connection. More formally, consider a linear threshold NN with $k \in \mathbb{N}$ hidden layers, an input vector $x \in \R^{w_0}$, and an output of the $k$-th hidden layer, denoted as $x^{(k)} \in \R^{w_k}$. For a shortcut linear threshold NN based on this linear threshold NN, the output is defined as $\langle A^\top x+b , x^{(k)} \rangle$, where $A \in \R^{w_0\times w_k}$ and $b \in \R^{w_k}$. It is worth noting that choosing $A=0$ recovers linear threshold NNs as defined in Definition~\ref{def:linear-threshold-ReLU-DNNs}.

    Analogous to linear threshold and ReLU NNs, we use SLT$_n^w(k)$ and SLT$_n(k)$ to denote the class of functions represented by shortcut linear threshold NNs with $w_0 = n$, $w_{k+1}=1$, $k$ hidden layers, and $w=(w_1, \ldots, w_k)$ representing the widths of the hidden layers. 
\end{definition}

    Note that the novelty in the above definition is in how the output is derived from the output of the final hidden layer and the original input. To the best of our knowledge, such a modification in the definition of neural representations of functions is new. We present some results that we feel give evidence for the usefulness of this definition.

To better present our results regarding the size bounds of neural networks that are capable of computing a specific function, we introduce the notation $\mathcal{N}(H,f)$ to represent the set of all neural networks in $H$ that can compute a given function $f$. For instance, $\mathcal{N}(\LT,f)$ represents the set of all linear threshold neural networks with an input dimension of $n$ and $k$ hidden layers that can compute the function $f$. We further use $\mathcal{N}_\mu(H,f)$ to denote the set of all neural networks in $H$ that can compute a function $g$ satisfying $\mu(\{x:f(x) \neq g(x)\}) = 0$, where $\mu(\cdot)$ denotes the Lebesgue measure. The size of a neural network $N$ is denoted by $|N|$.


\subsection{Our contributions}

\subsubsection{Results for linear threshold NNs}

Any function expressed by a linear threshold neural network is a constant piecewise function (i.e. $\LT \subseteq \text{PWC}_{n}$ for all natural numbers $k$), because a composition of piecewise constant functions is piecewise constant. In this work we show that linear threshold neural networks with 2 hidden layers can compute any constant piecewise function, i.e. $\text{LT}_n(2) = \text{PWC}_{n} $. We also prove that this is optimal, in the sense that a single hidden layer does not suffice to represent all piecewise constant functions. More formally,


\begin{restatable}{theorem}{theoremone}
\label{theorem:1} $\textup{LT}_1(1) = \PWC_1$, and for all natural numbers $n, k \geq 2$, $$\textup{LT}_n(1) \subsetneq \textup{LT}_n(2) = \LT = \PWC_n.$$ 
Equivalently, any piecewise constant function $f: \mathbb{R}^{n}\rightarrow \mathbb{R}$ can be computed by a linear threshold NN with at most $2$ hidden layers. Moreover,
\begin{equation*}
   \min_{N \in \mathcal{N}(\textup{LT}_n(2),f)} |N| \leq 3 |\mathcal{P}^*_f|.
\end{equation*} 
\end{restatable}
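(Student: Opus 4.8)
The plan is to establish the four assertions of the statement on top of the easy containment $\LT \subseteq \PWC_n$, which holds for every $k$ because a threshold layer outputs a piecewise constant vector and any affine map or further thresholding of a piecewise constant function is again piecewise constant. Granting this, the chain $\textup{LT}_n(1) \subsetneq \textup{LT}_n(2) = \LT = \PWC_n$ splits into: (i) a construction proving $\PWC_n \subseteq \textup{LT}_n(2)$ that simultaneously yields the quantitative bound; (ii) one explicit function in $\PWC_n \setminus \textup{LT}_n(1)$ witnessing strictness; and (iii) an argument that two hidden layers extend to any $k \geq 2$ without changing the function, upgrading $\textup{LT}_n(2) = \PWC_n$ to $\LT = \PWC_n$ for all $k \geq 2$. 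For (iii) I would re-threshold each bit using the identity $\mathbbm{1}_{\{t - 1/2 > 0\}} = t$ for $t \in \{0,1\}$, so an extra hidden layer copies the previous layer's $0/1$ output verbatim.

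The heart of the argument is (i). Fix $f \in \PWC_n$ and a minimum compatible complex $\mathcal{P}^*_f$; since the relative interiors $\{\mathring{P} : P \in \mathcal{P}^*_f\}$ partition $\R^n$ and $f$ is constant (say $v_P$) on each, we have the exact decomposition
\[ f = \sum_{P \in \mathcal{P}^*_f} v_P \, \mathbbm{1}_{\mathring{P}}. \]
I would implement every $\mathbbm{1}_{\mathring{P}}$ with a \emph{shared} first hidden layer and one dedicated neuron in the second layer. Let $\mathcal{H}$ be the set of hyperplanes that are affine hulls of the facets of the full-dimensional cells of $\mathcal{P}^*_f$. Because $\mathcal{P}^*_f$ subdivides all of $\R^n$, every face is an intersection of such facets, so both the affine hull equalities and the facet inequalities of each $P$ are expressible as sign conditions with respect to hyperplanes in $\mathcal{H}$; writing the convex $P$ as its affine hull intersected with the closed halfspaces of its facets gives $\mathring{P} = \{x \in \mathrm{aff}(P) : \langle a_i, x\rangle < b_i \text{ for each facet } i\}$. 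The first layer computes, for each $H \in \mathcal{H}$, the two bits $\mathbbm{1}_{\{h_H > 0\}}$ and $\mathbbm{1}_{\{h_H < 0\}}$, so that the equality bit $1 - \mathbbm{1}_{\{h_H > 0\}} - \mathbbm{1}_{\{h_H < 0\}}$ is available as an affine combination; the second layer uses one neuron per $P$ to form the AND of the relevant $\{0,1\}$ conjuncts via $\mathbbm{1}_{\{(\text{sum of bits}) - (m-1) > 0\}}$, where $m$ is their number; and the output affine map assembles $\sum_P v_P \mathbbm{1}_{\mathring{P}}$.

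For the size bound I would count: the second layer contributes exactly $|\mathcal{P}^*_f|$ neurons, and the first layer contributes $2|\mathcal{H}|$. The map sending each facet (a codimension-one member of $\mathcal{P}^*_f$) to its supporting hyperplane surjects onto $\mathcal{H}$, so $|\mathcal{H}| \leq |\mathcal{P}^*_f|$, giving a total of at most $2|\mathcal{P}^*_f| + |\mathcal{P}^*_f| = 3|\mathcal{P}^*_f|$, as claimed. For the one-dimensional identity $\textup{LT}_1(1) = \PWC_1$, a single layer suffices because in $\R^1$ each neuron is a step up or a step down and the combination $1 - \mathbbm{1}_{\{x > c\}} - \mathbbm{1}_{\{x < c\}} = \mathbbm{1}_{\{x = c\}}$ lets a one-layer network reproduce both the interval values and the isolated breakpoint values of any $f \in \PWC_1$. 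For strictness when $n \geq 2$, I would take $g = \mathbbm{1}_{\{x_1 > 0,\, x_2 > 0\}}$ (depending only on the first two coordinates for larger $n$) and use a jump-invariance principle: any $f \in \textup{LT}_n(1)$ has the form $c + \sum_j w_j \mathbbm{1}_{H_j}$, and its jump across a hyperplane $L$ is the same constant at every generic point of $L$, since only the terms with $\partial H_j = L$ flip there and each contributes its fixed weight; but $g$ jumps across $\{x_1 = 0\}$ by $+1$ at points with $x_2 > 0$ and by $0$ at points with $x_2 < 0$, so $g \notin \textup{LT}_n(1)$ while $g \in \PWC_n = \textup{LT}_n(2)$.

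The step I expect to be the main obstacle is pinning the clean constant $3$, i.e. the claim that the facet hyperplanes of the full-dimensional cells already suffice to express the relative interiors of \emph{all} faces, including the lower-dimensional ones. This relies on $\mathcal{P}^*_f$ covering $\R^n$, so that every face arises as a common face obtained by intersecting facets, and on checking that the induced sign pattern isolates $\mathring{P}$ exactly with no spurious points, which uses convexity of $P$. Getting these polyhedral details right, and confirming that the equality bits fold into the second-layer AND without spending extra neurons, is where the care is needed.
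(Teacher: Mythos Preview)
Your proposal is correct and follows the paper's approach: decompose $f=\sum_{P\in\mathcal{P}^*_f} v_P\,\mathbbm{1}_{\mathring P}$, share a first hidden layer of halfspace indicators (two per facet hyperplane, hence at most $2|\mathcal{P}^*_f|$ neurons), spend one second-layer neuron per $P\in\mathcal{P}^*_f$, and combine at the output to get the bound $3|\mathcal{P}^*_f|$. The only differences are cosmetic---you compute $\mathbbm{1}_{\mathring P}$ directly as a single AND of sign/equality bits, whereas the paper computes the closed indicators $\mathbbm{1}_Q$ in the second layer and recovers each $\mathbbm{1}_{\mathring P}$ by inclusion--exclusion at the output; and you witness strictness with $\mathbbm{1}_{\{x_1>0,\,x_2>0\}}$ via a jump-invariance argument, while the paper uses $\mathbbm{1}_{[0,1]^2}$ and observes that the breakpoint set of any one-hidden-layer function is a union of full hyperplanes---but neither change affects the structure or the count.
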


Next, we show that the bound on the size of the neural network in Theorem~\ref{theorem:1} is in a sense best possible, up to constant factors. 

\begin{restatable}{proposition}{proplowerboundclass}
\label{PROPOSITION:LOWER_BOUND_CLASS}
There exists a family of functions $f_n \in \PWC_n$ such that 
\begin{equation*}
    {\min_{N \in \mathcal{N}( \textup{LT}_n(2), f_n)} |N| \geq |\mathcal{P}_{f_n}^*|,\ \forall n \in \mathbb{N}_+.}
\end{equation*}

\end{restatable}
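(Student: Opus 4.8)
The plan is to construct, for each $n$, a single piecewise constant function $f_n$ that takes a \emph{generic} value on the relative interior of \emph{every} face of the coordinate-hyperplane fan, and then to argue that reproducing all of these values \emph{exactly} forces the second hidden layer to be almost as large as the entire minimal complex. Concretely, let $\mathcal{F}$ be the fan generated by the coordinate hyperplanes $\{x_i = 0\}$, $i=1,\dots,n$; its relatively open faces are indexed by sign patterns in $\{-,0,+\}^n$, so there are exactly $3^n$ of them. I would fix real numbers $\{g_s\}_{s\in\{-,0,+\}^n}$ and set $f_n(x) = g_{\mathrm{sgn}(x)}$, where $\mathrm{sgn}(x)\in\{-,0,+\}^n$ records the signs of the coordinates of $x$. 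By Theorem~\ref{theorem:1}, $f_n \in \PWC_n = \textup{LT}_n(2)$, so $\mathcal{N}(\textup{LT}_n(2),f_n)$ is nonempty and the minimum is well defined.

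First I would pin down $|\mathcal{P}^*_{f_n}|$. If the $g_s$ are pairwise distinct, then the relatively open faces of $\mathcal{F}$ are precisely the maximal regions on which $f_n$ is constant. Since the relative interiors of the polyhedra of any compatible complex partition $\R^n$ and $f_n$ must be constant on each such relative interior, every one of them is contained in a single face of $\mathcal{F}$; hence any compatible complex refines $\mathcal{F}$, and $\mathcal{F}$ itself, which has $3^n$ polyhedra, is the unique minimal one. Thus $|\mathcal{P}^*_{f_n}| = 3^n$.

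The heart of the argument is the size lower bound, and it uses only the form of the output. Fix any $N\in\textup{LT}_n(2)$ computing $f_n$, with hidden widths $(w_1,w_2)$; its output is $x\mapsto c_0 + \sum_{j=1}^{w_2} c_j\,\mathbbm{1}_{X_j}(x)$ for suitable sets $X_j$ and scalars $c_0,\dots,c_{w_2}$. Choose one representative point $p_s$ in the relative interior of each face of $\mathcal{F}$, giving $P:=3^n$ points, and record the value vector $y = (f_n(p_s))_s \in \R^{P}$ together with $v_j = (\mathbbm{1}_{X_j}(p_s))_s \in \{0,1\}^{P}$. Exact computation gives $y = c_0\mathbf{1} + \sum_{j=1}^{w_2} c_j v_j$, so $y$ lies in $\mathrm{span}(\{\mathbf 1\}\cup\{v_1,\dots,v_{w_2}\})$, a subspace of dimension at most $w_2+1$. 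Now I would apply a genericity argument: because each $v_j$ lies in the finite set $\{0,1\}^{P}$, the value vectors attainable by \emph{any} network with $w_2\le P-2$ lie in the finite union $\bigcup_{V\subseteq\{0,1\}^{P},\,|V|\le P-2}\mathrm{span}(\{\mathbf 1\}\cup V)$ of proper subspaces of $\R^{P}$. A finite union of proper subspaces cannot cover $\R^{P}$, so I may choose $\{g_s\}$ (pairwise distinct) so that $y$ avoids this union. For this choice of $f_n$, every computing network satisfies $w_2 \ge P-1 = 3^n-1$, and since any two-hidden-layer network has $w_1\ge 1$, we obtain $|N| = w_1+w_2 \ge 3^n = |\mathcal{P}^*_{f_n}|$.

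The main obstacle is the genericity step: I must exhibit a \emph{single} function $f_n$ that simultaneously defeats all networks with $w_2\le 3^n-2$, rather than one tailored to each architecture. The finite-union-of-proper-subspaces observation is exactly what makes this possible, but it rests on two points I would verify carefully: that restricting the neuron indicators to the finitely many points $p_s$ leaves only finitely many candidate vectors $v_j$ (immediate, since they live in $\{0,1\}^{P}$), and that the bias $c_0$ contributes only the single extra dimension spanned by $\mathbf 1$. A secondary point is the minimality claim $|\mathcal{P}^*_{f_n}| = 3^n$, namely that no compatible complex can be coarser than $\mathcal{F}$; this is where distinctness of the $g_s$ is used, forcing each relative interior to lie inside one face.
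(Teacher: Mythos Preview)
Your proof is correct but follows a genuinely different route from the paper's. The paper takes $f_n$ to be a ``slab'' function that assumes the values $1,\dots,m$ on $m$ parallel slabs $\{i-2 < x_1 \le i-1\}$; its breakpoint set is the union of $m-1$ pairwise-disjoint hyperplanes, and Corollary~\ref{lemma:breakpointsbetweenlayers} forces each of these to be the hyperplane associated with some first-layer neuron, so $w_1 \ge m-1$ and $|N|\ge m = |\mathcal{P}^*_{f_n}|$. In contrast, you work with the $3^n$ faces of the coordinate fan, assign \emph{generic} values, and lower-bound $w_2$ rather than $w_1$: the vector of values on $3^n$ test points lies in $\mathrm{span}(\mathbf{1},v_1,\dots,v_{w_2})$ with each $v_j\in\{0,1\}^{3^n}$, so a value vector chosen outside the finite union of $(3^n{-}1)$-dimensional subspaces forces $w_2\ge 3^n-1$. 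The paper's argument is shorter, fully constructive, and leans on the breakpoint-propagation lemma it has already developed; your argument is existential (a genericity step is essential), but it is self-contained and in fact applies to the last hidden layer of \emph{any} depth: it shows that achieving $P$ generic prescribed values requires at least $P-1$ neurons in the final hidden layer, regardless of what happens earlier in the network.
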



Notwithstanding Proposition~\ref{PROPOSITION:LOWER_BOUND_CLASS}, there still remains the possibility that the construction we give to prove the equalities in Theorem~\ref{theorem:1} is suboptimal in terms of the size of the networks produced by our construction. Going beyond our specific construction, it is {\em a priori} possible that there are families of piecewise constant functions that can be represented with polynomial size circuits if one uses more than 2 hidden layers, while any linear threshold NN with 2 hidden layers has super polynomial size. Such results have been established for NNs involving ReLU and other activation functions; see, e.g.,~\cite{arora2018understanding,telgarsky2016benefits,eldan2016power,pmlr-v49-cohen16} for a representative sample. 


Building upon our previous discussion and results that also consider polyhedra of lower dimensions, 
 it is important to highlight that the main focus in practical applications is full-dimensional polyhedra. This arises from the fact that for any countable set of data points, there is zero probability of them being contained within lower-dimensional polyhedra. 
 Furthermore, this approach provides an equitable basis for comparing against size results for ReLU NNs, which represent continuous functions whose values on lower-dimensional polyhedra are determined by the values on full-dimensional polyhedra. Thus, we formulate the ensuing theorem related to the smallest linear threshold NN size expressing a given function, placing our focus solely on full-dimensional polyhedra.

\begin{theorem} \label{th:a.e.SLTn}
    Let $f\in$ $\PWC_n$ with $p \geq 2$ pieces. Then, 
    \begin{equation*}
        \min_{N \in \mathcal{N}_\mu(\textup{LT}_n(2),f)} |N| \leq \frac{p(p+1)}{2}+1.
    \end{equation*}
\end{theorem}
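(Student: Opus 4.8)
The plan is to spend the almost-everywhere relaxation in $\mathcal{N}_\mu$ to completely ignore the lower-dimensional skeleton of any compatible complex and reason only about the $p$ full-dimensional pieces, which are convex. Let $\mathcal{P}^*_f$ realize $|f| = p$, and let $P_1, \dots, P_p$ be its full-dimensional polyhedra, on which $f$ takes constant values $c_1, \dots, c_p$. These are convex with pairwise disjoint interiors, and each is full-dimensional, so for every pair $i<j$ the proper separation theorem yields $(a_{ij}, \beta_{ij})$ with $P_i \subseteq \{x : \langle a_{ij}, x\rangle \le \beta_{ij}\}$ and $P_j \subseteq \{x : \langle a_{ij}, x\rangle \ge \beta_{ij}\}$. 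Moreover, since a full-dimensional convex set cannot meet a bounding hyperplane at an interior point, every $x \in \mathring{P_i}$ satisfies the first inequality \emph{strictly}, and every $x \in \mathring{P_j}$ the second; this strictness is what the construction relies on.

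First I would place, for each of the $\binom{p}{2}$ pairs, one first-layer threshold neuron $h_{ij}(x) = \mathbbm{1}_{\{\langle a_{ij}, x\rangle - \beta_{ij} > 0\}}$. By strict separation, for $x \in \mathring{P_m}$ we have $h_{im}(x) = 1$ whenever $i < m$ and $h_{mj}(x) = 0$ whenever $j > m$, while the remaining neurons are unconstrained. The second hidden layer then decodes the active piece by a counting argument: the integer $S_m(x) := \sum_{i<m} h_{im}(x) + \sum_{j>m}\bigl(1 - h_{mj}(x)\bigr)$ is a sum of $p-1$ terms in $\{0,1\}$, attains its maximum $p-1$ exactly when $x \in \mathring{P_m}$, and drops to at most $p-2$ on every other piece interior, because for any $m' \ne m$ the single neuron attached to the pair $\{m,m'\}$ is forced to contribute $0$. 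Hence one threshold neuron with bias $p-\tfrac{3}{2}$ computes $\mathbbm{1}_{\mathring{P_m}}$ almost everywhere, giving $p$ second-layer neurons. Finally the output affine map $x \mapsto \sum_{m} c_m \mathbbm{1}_{\mathring{P_m}}(x)$ assigns the correct constant on each piece, so the network agrees with $f$ except on the finite union of all separating hyperplanes and decoding boundaries, a Lebesgue-null set.

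Counting hidden neurons gives $\binom{p}{2} + p = \tfrac{p(p+1)}{2}$, comfortably within the claimed $\tfrac{p(p+1)}{2}+1$ (the extra unit leaves slack for, e.g., a constant/reference neuron or looser bookkeeping). The step I expect to be the crux is the decoding inequality: that pairwise separators \emph{alone} identify the piece containing an interior point, and that a wrong value in a single coordinate already pushes $S_m$ below the threshold, so one second-layer neuron per piece suffices rather than a full facet description. This is precisely where replacing exact equality by a.e. equality pays off, in contrast to the facet-based, $|\mathcal{P}^*_f|$-sized construction behind Theorem~\ref{theorem:1}. The remaining ingredients are routine: proper separability of full-dimensional convex pieces, strictness of the separation on relative interiors, and the observation that the exceptional set is a finite union of hyperplanes and hence null.
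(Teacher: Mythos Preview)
Your proof is correct. Both your argument and the paper's use the same high-level architecture—$\binom{p}{2}$ first-layer neurons attached to pairwise separators and $p$ second-layer piece detectors—but the implementations differ. The paper obtains its first-layer hyperplanes from the $(n-1)$-dimensional faces of $\mathcal{P}^*_f$, whose number is bounded by $\binom{p}{2}$ via Lemma~\ref{lemma:BoundForPolyhedraComplex}(2), and then detects each $P_i$ in the second layer by the Lemma~\ref{lemma:2} construction (``no violated facet''), adding one auxiliary $\mathbbm{1}_{\R^n}$ neuron to flip halfspace orientations. You instead invoke proper separation directly for every pair and decode each piece by a tournament/vote count $S_m$, which is an affine function of the first-layer outputs and therefore needs no auxiliary neuron. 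Your route is more self-contained (it does not rely on Lemma~\ref{lemma:BoundForPolyhedraComplex}) and in fact yields $\binom{p}{2}+p=\frac{p(p+1)}{2}$, one neuron better than the paper's stated bound; the paper's route, on the other hand, ties the first layer to the actual facet structure of the complex and can use strictly fewer than $\binom{p}{2}$ first-layer neurons when the complex has few facets.
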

The applicability and significance of this theorem will become more evident in Section~\ref{sec:SLTNN}, specifically in the context of our proposed network structure. When expressing a given continuous piecewise linear function, this theorem provides an upper bound for the minimum neural network size that is significantly smaller compared to existing ReLU NN results. However, Theorem~\ref{th:a.e.SLTn} ignores sets of measure zero to derive an upper bound quadratic in $p$. If we seek to precisely express a given function $f \in \PWC_n$ with $p$ pieces, the bound increases to $\mathcal{O}(p^{n+1})$ as shown in the following proposition.

\begin{restatable}{proposition}{propupperbound} \label{th:upperboundSLTn}
    Let $f\in$ {$\PWC_n$} with $p \geq n+1$ pieces. Then,  
    \begin{equation*}
        \min_{N \in \mathcal{N}(\textup{LT}_n(2),f)} |N| \leq 3 \left( \frac{ep}{n+1} \right)^{n+1}.
    \end{equation*}
\end{restatable}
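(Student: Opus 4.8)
The plan is to combine the size bound of Theorem~\ref{theorem:1} with a purely combinatorial estimate on the number of faces of the minimal compatible complex. Since $\min_{N \in \mathcal{N}(\textup{LT}_n(2),f)}|N| \le 3|\mathcal{P}^*_f|$ by Theorem~\ref{theorem:1}, and since by definition the number of pieces is $p = |\full(\mathcal{P}^*_f)|$, it suffices to prove the following statement about a polyhedral complex $\mathcal{P}$ covering $\R^n$: if $\mathcal{P}$ has $p \geq n+1$ full-dimensional polyhedra, then $|\mathcal{P}| \le (ep/(n+1))^{n+1}$. Applying this to $\mathcal{P} = \mathcal{P}^*_f$ and chaining with Theorem~\ref{theorem:1} immediately yields the proposition.

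To bound $|\mathcal{P}|$, I would construct an injection from the set of all faces of $\mathcal{P}$ into the collection of subsets of $\full(\mathcal{P})$ of cardinality at most $n+1$. The guiding fact is that in a polyhedral complex covering $\R^n$, every face $F$ equals the intersection of the full-dimensional cells containing it: writing $S(F) = \{P \in \full(\mathcal{P}) : F \subseteq P\}$, one has $F = \bigcap_{P \in S(F)} P$ (locally, near a relative interior point of $F$, the complex is the star of $F$, whose cells are exactly those in $S(F)$, and their common intersection is the minimal face $F$; note also that any finite intersection of cells is a face of $\mathcal{P}$, by induction from property (A)). The key refinement is a Carath\'eodory-type reduction: there is a subcollection $\phi(F) \subseteq S(F)$ with $|\phi(F)| \le n - \dim(F) + 1 \le n+1$ and still $\bigcap_{P \in \phi(F)} P = F$. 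Granting this, $\phi$ is injective, since $\phi(F) = \phi(F')$ forces $F = \bigcap_{P \in \phi(F)} P = \bigcap_{P \in \phi(F')} P = F'$, so the number of faces is at most the number of nonempty subsets of $\full(\mathcal{P})$ of size at most $n+1$, giving $|\mathcal{P}| \le \sum_{i=1}^{n+1}\binom{p}{i}$.

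The heart of the argument, and the step I expect to be the main obstacle, is the Carath\'eodory reduction. I would prove it locally at a relative interior point $x$ of $F$ by passing to tangent cones: let $K_P$ be the tangent cone of $P$ at $x$ for each $P \in S(F)$. Because $F$ is a genuine $d$-dimensional face of every such $P$, the lineality space of each $K_P$ equals the $d$-dimensional direction space $L$ of $\mathrm{aff}(F)$, and $\bigcap_{P \in S(F)} K_P = L$. Quotienting by $L$ turns the $K_P$ into pointed, full-dimensional polyhedral cones $\bar K_P \subseteq \R^m$ with $m = n-d$ whose intersection is $\{0\}$; each pointed cone meets the unit sphere $S^{m-1}$ in a spherically convex set contained in an open hemisphere, and these traces have empty common intersection. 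Invoking the spherical Helly theorem (Helly number $m+1$ for proper spherically convex subsets of $S^{m-1}$) produces $m+1 = n-d+1$ cones whose intersection is $\{0\}$; the corresponding pieces $\phi(F)$ then satisfy $\bigcap_{P \in \phi(F)} P = F$, since that intersection is a face of $\mathcal{P}$ of dimension $d$ containing the $d$-face $F$ and hence equals $F$. The delicate point is precisely that the traces live in \emph{different} hemispheres, so one cannot simply centrally project to $\R^m$ and apply ordinary Helly; the correct constant $n-d+1$ comes from the genuinely spherical version, and obtaining this constant (rather than the weaker $2m$ from a naive Steinitz/Carath\'eodory argument on the facet normals) is exactly what makes the final exponent $n+1$ rather than $2n$.

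Finally, I would close with the elementary binomial estimate $\sum_{i=0}^{n+1}\binom{p}{i} \le \left(\frac{ep}{n+1}\right)^{n+1}$, valid for $p \ge n+1$, so that $|\mathcal{P}^*_f| \le \sum_{i=1}^{n+1}\binom{p}{i} \le (ep/(n+1))^{n+1}$ and therefore $\min_{N}|N| \le 3|\mathcal{P}^*_f| \le 3(ep/(n+1))^{n+1}$, as claimed.
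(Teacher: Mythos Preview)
Your overall strategy matches the paper exactly: invoke Theorem~\ref{theorem:1} to reduce to bounding $|\mathcal{P}^*_f|$, then show that every $k$-face of the complex is the intersection of at most $n-k+1$ full-dimensional cells, which gives the injection into $\sum_{i\le n+1}\binom{p}{i}$ and the Stirling-type bound. This is precisely the content of the paper's Lemma~\ref{lemma:BoundForPolyhedraComplex} (parts 2 and 3), which the paper simply cites in its one-line proof of the proposition.

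Where you differ is in how you establish the key step that a $k$-face $F$ equals an intersection of $n-k+1$ full-dimensional cells. The paper's argument is entirely elementary: fix one full-dimensional $P_i$ having $F$ as a face; show by a short limit argument that each facet of $P_i$ is shared with a distinct full-dimensional neighbour; then write $F$ as an intersection of $n-k$ facets of $P_i$ (basic polyhedral algebra), so that $F = P_i \cap P_{j_1}\cap\cdots\cap P_{j_{n-k}}$. No auxiliary theorem is needed. Your route via tangent cones, quotienting by the lineality, and spherical Helly is more conceptual and would in principle extend beyond polyhedral complexes, but it imports a nontrivial result whose exact hypotheses matter: your spherical traces cover all of $S^{m-1}$, which is precisely the regime where the naive spherical Helly statement can fail, and one needs the ``each set in an open hemisphere'' form with Helly number $m+1$. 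That version is correct, but once you are already exploiting the fan structure at $x$ (pointed full-dimensional cones with disjoint interiors covering $\R^m$), the paper's facet argument gives the same $n-k+1$ bound in two lines and is fully self-contained. So your approach is valid; the paper's is shorter and avoids the delicate spherical-Helly invocation you flag as the main obstacle.
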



\subsubsection{Algorithm for exact empirical risk minimization }

{ In addition to our structural results,} we present a new algorithm to perform exact {\em empirical risk minimization (ERM)} for linear threshold neural networks with fixed architecture, i.e., fixed $k$ and $w=(w_1, \ldots, w_k)$. Given $D$ data points $(x_i, y_i) \in \mathbb{R}^{n} \times \mathbb{R}, \, i=1, \cdots, D$, the ERM problem with hypothesis class $\LTw$ is 

\begin{equation}\label{eq:problem_3_layer_HDNN}
\min_{f\in \LTw } \; \frac{1}{D} \sum_{i=1}^{D} \ell(f(x_i), y_i), 
\end{equation}
where $\ell$ is a convex loss function. 

\begin{restatable}{theorem}{thmoregeneralcomplexity}
\label{th:more_general_complexity}
 For natural numbers $n, k$ and tuple $w = (w_1, \ldots, w_k)$, there exists an algorithm that computes the global optimum~\eqref{eq:problem_3_layer_HDNN}, up to $\epsilon$-accuracy, with running time 
  $O(D^{w_1 n}\cdot 2^{\sum_{i=1}^{k-1}w_i^{2} w_{i+1}}\cdot  \textup{poly}(D, w_1, \ldots, w_k, \log\left(\frac{1}{\epsilon}\right)))$. If $\ell$ is the absolute value difference, then the global optimum can be computed exactly.
\end{restatable}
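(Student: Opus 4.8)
The plan is to exploit the fact that the objective in~\eqref{eq:problem_3_layer_HDNN} depends on a network $f \in \LTw$ only through the output vector $(f(x_1), \ldots, f(x_D))$, and that, as the parameters of the $k$ threshold layers vary, the internal behaviour of the network on the fixed data set takes only finitely many values. Fix the data $x_1, \ldots, x_D \in \R^n$. Each neuron in the first hidden layer computes $\mathbbm{1}_{\{a^\top x + b > 0\}}$, and its output on the data is the sign pattern $(\mathbbm{1}_{\{a^\top x_i + b > 0\}})_{i=1}^D \in \{0,1\}^D$. By the classical bound on the number of dichotomies of $D$ points by affine halfspaces in $\R^n$ (equivalently, the number of cells of the dual central hyperplane arrangement of $D$ hyperplanes in $\R^{n+1}$, which is $2\sum_{i=0}^{n}\binom{D-1}{i} = O(D^n)$), a single neuron realizes at most $O(D^n)$ distinct sign patterns; hence the $w_1$ neurons of the first layer jointly realize at most $O(D^{w_1 n})$ distinct behaviours on the data.

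I would then iterate this reasoning layer by layer. Once the first-layer behaviour is fixed, the data points are mapped into $\{0,1\}^{w_1}$, so at most $m_1 \le \min(D, 2^{w_1})$ distinct points enter the second layer. A threshold neuron acting on $m$ points in $\R^{w_i}$ realizes at most $O(m^{w_i})$ dichotomies (Sauer--Shelah, the VC dimension of halfspaces in $\R^{w_i}$ being $w_i + 1$), so layer $i+1$ contributes at most $O\big((2^{w_i})^{w_i w_{i+1}}\big) = O(2^{w_i^2 w_{i+1}})$ distinct behaviours for each fixed behaviour of the earlier layers. Multiplying the per-layer counts along the resulting enumeration tree yields at most $O\big(D^{w_1 n}\cdot 2^{\sum_{i=1}^{k-1} w_i^2 w_{i+1}}\big)$ distinct configurations of the $k$ threshold layers on the data.

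For each such configuration the output of the last hidden layer is a fixed vector $x_i^{(k)} \in \{0,1\}^{w_k}$ for every data point, so the network output is the affine map $f(x_i) = \langle c, x_i^{(k)}\rangle + d$ in the output-layer parameters $(c,d) \in \R^{w_k+1}$. The restricted problem $\min_{c,d} \frac{1}{D} \sum_i \ell(\langle c, x_i^{(k)}\rangle + d,\, y_i)$ is convex because $\ell$ is convex, and can be solved to $\epsilon$-accuracy in time $\textup{poly}(D, w_1, \ldots, w_k, \log(1/\epsilon))$ by a standard convex-programming method (e.g.\ the ellipsoid or an interior-point method); when $\ell$ is the absolute-value loss it is a linear program and is solved exactly with a rational optimum. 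Taking the minimum over all configurations and their corresponding optimal output layers solves~\eqref{eq:problem_3_layer_HDNN}, and the total running time is the number of configurations times the per-configuration cost, which matches the claimed bound.

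The main obstacle is turning the counting argument into a genuine \emph{enumeration} algorithm: at each layer I must not merely count but actually list all \emph{realizable} sign patterns together with witnessing parameters (or certify feasibility). I would do this by enumerating the cells of the dual central hyperplane arrangement in each neuron's parameter space --- for a neuron whose input lies in $\R^d$ and which sees $m$ distinct input points, this is a central arrangement of $m$ hyperplanes in $\R^{d+1}$ with $O(m^d)$ cells, enumerable in time polynomial in the number of cells --- and by resolving the feasibility of each candidate joint configuration with a linear program. Two secondary technical points need care: the threshold is the \emph{strict} indicator $\mathbbm{1}_{\{x>0\}}$, so a point lying exactly on a separating hyperplane outputs $0$ and must be assigned to the correct face of the arrangement; and the points entering the deeper layers are binary vectors, hence not in general position, so the dichotomy counts must be invoked in their general Sauer--Shelah form rather than assuming genericity.
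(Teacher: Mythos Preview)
Your proposal is correct and follows essentially the same route as the paper. The paper packages the deeper-layer enumeration via the notion of \emph{linearly separable collections} of subsets of $\{1,\ldots,w_{i-1}\}$ (its Lemma~\ref{lemma:4} and Corollary~\ref{th:lin-sep-count}), but this is exactly your observation that the inputs to layer $i$ lie in $\{0,1\}^{w_{i-1}}$ and one must enumerate threshold dichotomies of the hypercube vertices; the counting bounds, the first-layer cell enumeration, and the final convex (or LP) step all coincide.
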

Thus, the algorithm is polynomial in the size of the data sample, if $n$, $k$, $w_1$, $\ldots$, $w_k$ are considered fixed constants.





\subsubsection{Shortcut linear threshold NNs}\label{sec:SLTNN}


\begin{theorem}\label{th:SLTnrepresentability}
    \textup{ReLU}$_n{(\lceil \log_2(n+1) \rceil)}=\textup{CPWL}_n$ $\subsetneq$ \textup{PWL}$_n=\textup{SLT}_n(2)$.
\end{theorem}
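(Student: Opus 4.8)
The first equality $\ReLU_n(\lceil\log_2(n+1)\rceil)=\CPWL_n$ is precisely the representation theorem of Arora et al.~\cite{arora2018understanding}, so I would simply invoke it. The strict inclusion $\CPWL_n\subsetneq\PWL_n$ is immediate: every continuous piecewise linear function is piecewise linear, whereas the one-dimensional step $x\mapsto\mathbbm{1}_{\{x>0\}}$ (trivially extended to $\R^n$) is piecewise linear but discontinuous, hence lies in $\PWL_n\setminus\CPWL_n$. The substance of the statement is the equality $\PWL_n=\SLT_n(2)$, which I would prove as two inclusions.

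For $\SLT_n(2)\subseteq\PWL_n$, expand the output of an arbitrary network in $\SLT_n(2)$ as
\[
\langle A^\top x+b,\,x^{(2)}\rangle=\sum_{i=1}^{w_2}\bigl(\langle A_{\cdot,i},x\rangle+b_i\bigr)\,\mathbbm{1}_{X_i}(x),
\]
where $x^{(2)}_i=\mathbbm{1}_{X_i}(x)$ is coordinate $i$ of the second hidden layer of the underlying linear threshold network. By Theorem~\ref{theorem:1} each such coordinate is piecewise constant, so there is a single polyhedral complex on whose relative interiors the whole vector $x^{(2)}$ is a fixed $\{0,1\}$-vector; on each cell the sum collapses to one fixed affine function of $x$. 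Hence the function is piecewise linear.

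For the reverse inclusion, fix $f\in\PWL_n$ and a compatible polyhedral complex $\mathcal{P}$ covering $\R^n$, so that on $\mathring Q$ we have $f(x)=\langle c_Q,x\rangle+d_Q$ for each $Q\in\mathcal{P}$. The plan is to build a $2$-hidden-layer linear threshold network whose final layer outputs the vector $\bigl(\mathbbm{1}_{\mathring Q}(x)\bigr)_{Q\in\mathcal{P}}$, then read off $f$ through the shortcut. In the first hidden layer I place, for every hyperplane $\{h(x)=0\}$ that is the affine hull of a facet of $\mathcal{P}$, neurons computing $\mathbbm{1}_{\{h(x)>0\}}$ and $\mathbbm{1}_{\{-h(x)>0\}}$; an affine combination of this pair tests the equality $h(x)=0$, and a single such indicator tests a strict inequality. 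The relative interior $\mathring Q$ is cut out by finitely many equalities and strict inequalities in these $h$'s, and a single second-layer threshold neuron computes their logical AND: with $u_j,v_j$ the equality-test indicators ($j\in E$), $w_\ell$ the strict-inequality indicators ($\ell\in I$), the preactivation $\sum_{\ell\in I}w_\ell-\sum_{j\in E}(u_j+v_j)-(|I|-1)$ is positive exactly when all conditions hold, yielding $x^{(2)}_Q=\mathbbm{1}_{\mathring Q}(x)$. Since $\{\mathring Q:Q\in\mathcal{P}\}$ partitions $\R^n$, exactly one coordinate of $x^{(2)}$ equals $1$ at each $x$. Setting the shortcut parameters so that column $Q$ of $A$ is $c_Q$ and $b_Q=d_Q$ gives
\[
\langle A^\top x+b,\,x^{(2)}\rangle=\sum_{Q\in\mathcal{P}}\bigl(\langle c_Q,x\rangle+d_Q\bigr)\mathbbm{1}_{\mathring Q}(x)=f(x)\qquad\text{for every }x\in\R^n,
\]
so $f\in\SLT_n(2)$.

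The main obstacle is representing $f$ \emph{exactly}, including on lower-dimensional faces. A naive construction producing only indicators of full-dimensional cells reproduces $f$ merely almost everywhere and fails on the measure-zero boundary. Overcoming this requires (i) detecting equalities $h(x)=0$ with strict-threshold activations, which is why both $\mathbbm{1}_{\{h>0\}}$ and $\mathbbm{1}_{\{-h>0\}}$ appear in the first layer, and (ii) verifying that one second-layer neuron with an appropriate integer bias simultaneously enforces all equalities and strict inequalities defining $\mathring Q$, so that the relative-interior indicators of \emph{every} polyhedron of the complex—across all dimensions—appear in the final hidden layer. The bookkeeping that the threshold AND-gate separates ``all conditions satisfied'' from ``at least one violated'' is the one delicate step; once it is in place, the partition-of-unity property of relative interiors delivers exact equality everywhere.
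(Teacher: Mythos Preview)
Your proposal is correct, and the overall architecture of the argument matches the paper's: cite Arora et al.\ for $\ReLU_n=\CPWL_n$, note the strict inclusion via a discontinuous example, observe $\SLT_n(2)\subseteq\PWL_n$ by expanding the inner product, and spend the effort on $\PWL_n\subseteq\SLT_n(2)$ by producing the relative-interior indicators $\mathbbm{1}_{\mathring Q}$ in the last hidden layer and reading off the affine pieces through the shortcut.

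Where you genuinely diverge from the paper is in \emph{how} you realize $\mathbbm{1}_{\mathring Q}$. The paper does not compute $\mathbbm{1}_{\mathring Q}$ with a single second-layer neuron. Instead it invokes Lemma~\ref{lemma:3}: each second-layer neuron computes the indicator of a \emph{closed} face $F$ (via Lemma~\ref{lemma:2}), and $\mathbbm{1}_{\mathring P}=\mathbbm{1}_P-\mathbbm{1}_{\text{union of facets}}$ is obtained as an inclusion--exclusion \emph{linear combination} of these closed indicators. The inclusion--exclusion coefficients are then folded into the shortcut matrix $A$ and vector $b$, so that the column of $A$ attached to the neuron for face $F$ is $\alpha\, a_i$ for the appropriate sign $\alpha$ and affine piece $a_i$. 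Your construction sidesteps inclusion--exclusion entirely: by placing both $\mathbbm{1}_{\{h>0\}}$ and $\mathbbm{1}_{\{h<0\}}$ in the first layer you detect the equality $h=0$ inside the preactivation, and your integer-bias AND gate fires exactly on $\mathring Q$. This is more direct and makes the shortcut parameters cleaner (one column per cell, carrying exactly $c_Q$), at the cost of needing both orientations of every hyperplane in the first layer; the paper's route reuses the closed-polyhedron machinery already developed for Theorem~\ref{theorem:1} but pays with the inclusion--exclusion bookkeeping inside $A$ and $b$.
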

Moreover, for a given function $f \in \PWL_n$, we can derive the same upper bound on the size of shortcut linear threshold neural networks as in {Proposition~\ref{th:upperboundSLTn} and Theorem~\ref{th:a.e.SLTn}}, using analogous arguments. {Furthermore, for representing continuous piecewise linear functions, the continuity of the function allows us to modify our construction to eliminate the need to ignore sets of zero measure while asserting an upper bound quadratic in $p$. This particular insight contributes to the subsequent corollary of Theorem~\ref{th:a.e.SLTn}.
\begin{corollary}\label{cor:a.e.SLT}
    Let $f\in$ $\CPWL_n$ with $p \geq 2$ pieces. Then, 
    \begin{equation*}
        \min_{N \in \mathcal{N}(\textup{SLT}_n(2),f)} |N| \leq p^2+1.
    \end{equation*}
\end{corollary}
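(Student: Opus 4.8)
The plan is to reuse the two–hidden–layer architecture underlying Theorem~\ref{th:a.e.SLTn} and to exploit continuity only in how the output is read off. Let $P_1,\dots,P_p$ be the full-dimensional cells of $\mathcal{P}^*_f$, so that on $P_i$ the function $f$ coincides with an affine piece $g_i(x)=\langle a_i,x\rangle+c_i$. A shortcut network computes $\langle A^\top x+b,\,x^{(2)}\rangle$, where $x^{(2)}$ is the output of the second hidden layer; hence if we can arrange that $x^{(2)}=e_{w(x)}$ is a \emph{unit} indicator vector with $x\in P_{w(x)}$, then setting the $i$-th column of $A$ equal to $a_i$ and $b_i=c_i$ makes the output equal to $g_{w(x)}(x)$. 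The decisive point, and exactly where $f\in\CPWL_n$ is used, is that $g_{w(x)}(x)=f(x)$ for \emph{every} $x$ — boundary points included — as long as $x\in P_{w(x)}$, since a continuous $f$ agrees with the affine piece $g_i$ on all of the closed cell $P_i$. Thus the corollary reduces to building a $2$-hidden-layer linear threshold circuit that, for every $x\in\R^n$, outputs the indicator of a single cell of $\mathcal{P}^*_f$ whose closure contains $x$.

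I would begin from the construction behind Theorem~\ref{th:a.e.SLTn}. Because the cells of a polyhedral complex are convex with pairwise disjoint interiors, each pair $\{i,j\}$ can be strictly separated by a hyperplane $H_{ij}$ with $\mathring{P}_i$ and $\mathring{P}_j$ on opposite open sides. The first hidden layer computes the $\binom{p}{2}$ halfspace indicators of the $H_{ij}$, and the second layer fires neuron $i$ precisely when $x$ is on $P_i$'s side of $H_{ij}$ for all $j\neq i$. On $\bigcup_i \mathring{P}_i$ this selects the correct cell, but on the (measure-zero) union of boundaries the strict inequalities are ambiguous and a point may be assigned to \emph{no} cell; this is exactly the null set that Theorem~\ref{th:a.e.SLTn} discards.

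To repair the boundary without discarding anything, I would fix a generic direction $d\in\R^n$ and assign each $x$ to the cell entered by the infinitesimal perturbation $x+\epsilon d$. Concretely, replace each open separating test by its tie-broken version: a boundary point on $H_{ij}$ is counted on $P_i$'s side according to the sign of $\langle a_{ij},d\rangle$. For generic $d$ and all small $\epsilon>0$, the point $x+\epsilon d$ lies in the interior of a unique cell $P_{i^\ast}$, so $i^\ast$ beats every competitor and is the unique winner, while $x\in\overline{\mathring{P}_{i^\ast}}=P_{i^\ast}$; hence $x^{(2)}=e_{i^\ast}$ exactly, with no double counting, for every $x$. Implementing each tie-broken pairwise test may require retaining both the ``$>$'' and ``$<$'' threshold neurons attached to $H_{ij}$, so that the equality case can be resolved by an affine combination in the second layer; this roughly doubles the first layer to $p(p-1)$ neurons, the second layer keeps its $p$ selection neurons, and a single extra constant/bias neuron assembles the output, giving the stated $p^2+1$. (I note the more economical perturbation bookkeeping could in fact recover the $\tfrac{p(p+1)}{2}+1$ bound, so $p^2+1$ is a comfortable, cleanly analyzable estimate.)

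The routine parts are the halfspace arithmetic and the neuron count. The genuine obstacle is proving that the tie-break is \emph{globally} consistent: that for every $x$, including points on low-dimensional faces where many cells meet, the perturbed-assignment rule selects exactly one cell and that this cell's closure contains $x$. This relies on choosing $d$ generic with respect to all facet normals together with the fact that $\mathcal{P}^*_f$ tiles $\R^n$, so that $x+\epsilon d$ is interior to a single cell; the continuity of $f$ then guarantees that this winner, even when it ``looks wrong'' relative to the naive open-side test, reports the correct value $f(x)$. Verifying that the pairwise separators can be chosen and tie-broken coherently across multi-cell intersections — not merely across a single shared facet — is the crux I would need to handle carefully.
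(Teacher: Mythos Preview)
Your proposal is correct and follows essentially the same approach the paper sketches. The paper does not give a standalone proof of Corollary~\ref{cor:a.e.SLT}; it only remarks that ``the continuity of the function allows us to modify our construction to eliminate the need to ignore sets of zero measure,'' pointing back to the proof of Theorem~\ref{th:a.e.SLTn}. Your write-up supplies precisely the missing modification: orienting the half-open cells via a generic direction $d$ so that the sets $\tilde P_i=\{x:\ x+\epsilon d\in\mathring P_i\text{ for all small }\epsilon>0\}$ partition $\R^n$, and then reading off $g_{i^\ast}(x)=f(x)$ via the shortcut because continuity forces $g_i\equiv f$ on the \emph{closed} cell $P_i$. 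That is exactly the intended argument.

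Two minor remarks. First, your ``pairwise separating hyperplane'' setup differs cosmetically from the paper's ``facet hyperplanes of $\mathcal P_f^*$'' (Theorem~\ref{th:a.e.SLTn} via Lemma~\ref{lemma:BoundForPolyhedraComplex}, part~2), but not substantively: for adjacent pairs the separator is forced to be the shared facet hyperplane, and for non-adjacent pairs the extra constraint is redundant in defining $Q_i$, so both routes yield the same second-layer regions. Second, you already note that doubling to $p(p-1)$ first-layer neurons is conservative; indeed, if one orients each facet-hyperplane neuron to compute $\mathbbm 1_{\{a_k^\top x<b_k\}}$ with $a_k\cdot d>0$, then both the ``$\ge$'' and ``$<$'' constraints of $\tilde P_i$ have violation indicators expressible as $M_k$ or $1-M_k$, so a single orientation plus the bias suffices. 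This recovers the $\tfrac{p(p+1)}{2}+1$ count you mention parenthetically, but the corollary only claims $p^2+1$, which your doubled version delivers.
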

}

{We can compare Corollary~\ref{cor:a.e.SLT} with known bounds on the sizes of ReLU NNs. For a fixed CPWL$_n$ function $f$ with $p$ pieces, Theorem~2.1 in \cite{arora2018understanding} establishes that a ReLU NN needs no more than $\lceil \log_2(n+1) \rceil$ hidden layers to compute $f$. In contrast, our SLT$_n$ construction described in Theorem~\ref{th:SLTnrepresentability} requires only $2$ hidden layers. Additionally, Theorem~4.4 in \cite{hertrich2021towards} states that the ReLU NN, with $\lceil \log_2(n+1) \rceil$ hidden layers to compute $f$, will have width bounded by $\mathcal{O}(p^{2n^2+3n+1})$. In comparison, our Corollary~\ref{cor:a.e.SLT} implies a significantly tighter bound for the size of the SLT$_n$ network to compute a same function $f$, namely $\mathcal{O}(p^{2})$.}


It is also straightforward to modify the ERM algorithm presented in Theorem~\ref{th:more_general_complexity} to apply to shortcut linear threshold NNs with the same architecture. {We direct readers to Section~\ref{sec:ERMforSLT} for details.}

\bigskip
The rest of the article is organized as follows. Section~\ref{sec:prelim} collects some general structural results on neutral networks that use threshold activations and introduces some concepts useful for this analysis that will be used throughout the paper. Section~\ref{sec:2} gives the proofs of the results stated above for linear threshold NNs. Section~\ref{sec:ShortcutNN} provides the proofs of the results involving shortcut linear threshold NNs discussed above. Section~\ref{sec:5} closes the paper with a discussion of some open questions.


\section{Preliminary results for threshold activations}\label{sec:prelim}

In this section, we will collect some structural results for neural networks with linear threshold activations. These will be useful for the proofs of our main results.


\begin{definition}
Let $m \geq 1$ be any natural number. We say that a collection $\mathcal{A}$ of subsets of $\{1, \ldots, m\}$ is {\em linearly separable} if there exist $\alpha_1, \ldots, \alpha_m, \beta \in \R$ such that any subset $A \subseteq \{1, \ldots, m\}$ is in $\mathcal{A}$ if and only if $\sum_{s\in A} \alpha_s + \beta > 0$. {$\mathcal{L}_m$ refers to the set of all linearly separable collections of subsets of $\{1, \ldots, m\}$.}
\end{definition}

\begin{remark} We note that given a collection $\mathcal{A}$ of subsets of  $\{1, \cdots, m\}$ 
one can test if $\mathcal{A}$ is linearly separable by checking if the optimum value of the following linear program is strictly positive:
\begin{align*}
    &\max_{t\in \mathbb{R}, \alpha \in \mathbb{R}^{m}, \beta \in \mathbb{R}} \;\; t 
    \\
    \textup{s.t.} \;\; \sum_{s\in A} \alpha_s + \beta \geq t \quad & \forall A \in \mathcal{A} \quad  \textup{and} \quad
     \sum_{s\in A} \alpha_s + \beta \leq 0 \quad \forall A \notin \mathcal{A}
\end{align*}
In Algorithm~\ref{Algorithm:heavisideopt} below, we will enumerate through all possible collections in $\mathcal{L}_m$ (for different values of $m$). We assume this has been done a priori using the linear programs above and this enumeration can be done in time $|\mathcal{L}_m|$ during the execution of Algorithm~\ref{Algorithm:heavisideopt}.
\end{remark}

\begin{example}
  In $\mathbb{R}^{2}$, $\mathcal{A}=\{\emptyset, \{1\}, \{2\}\}$ is linearly separable, but $\{\emptyset, \{1,2\}\}$ is not linearly separable because the set of inequalities $\beta > 0$, $\alpha_1 + \alpha_2 + \beta > 0$, $\alpha_1 + \beta \leq 0 $ and $\alpha_2 + \beta \leq 0$ have no solution. Two examples of linearly separable collections in $\mathbb{R}^{3}$ are given in Figure \ref{fig:splitting_cubes}. 
\end{example}





\newcommand{\Depth}{2}
\newcommand{\Height}{2}
\newcommand{\Width}{2}

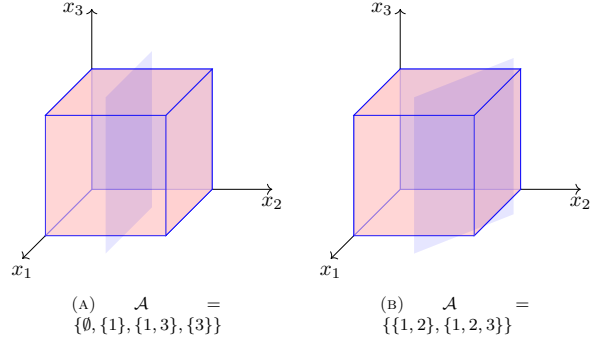
\begin{figure}

    \centering
   \scalebox{0.8}{ 
    \subfloat[$\mathcal{A}=\{ \emptyset, \{1\}, \{1,3\}, \{3\}\}$]{
   \begin{tikzpicture}
\coordinate (O) at (0,0,0);

\coordinate (A) at (0,\Width,0);

\coordinate (B) at (0,\Width,\Height);

\coordinate (C) at (0,0,\Height);

\coordinate (D) at (\Depth,0,0);

\coordinate (E) at (\Depth,\Width,0);

\coordinate (F) at (\Depth,\Width,\Height);

\coordinate (G) at (\Depth,0,\Height);

 \draw[->] (0,0,0) -- (3,0,0) node[below]{$x_2$};
 \draw[->] (0,0,0) -- (0,0,3) node[below]{$x_1$};
 \draw[->] (0,0,0) -- (0,3,0) node[left]{$x_3$};

\draw[blue,fill=red!10] (O) -- (C) -- (G) -- (D) -- cycle;
\draw[blue,fill=blue!30] (O) -- (A) -- (E) -- (D) -- cycle;
\draw[blue,fill=red!10] (O) -- (A) -- (B) -- (C) -- cycle;
\draw[blue,fill=red!20,opacity=0.8] (D) -- (E) -- (F) -- (G) -- cycle;
\draw[blue,fill=red!20,opacity=0.6] (C) -- (B) -- (F) -- (G) -- cycle;
\draw[blue,fill=red!20,opacity=0.8] (A) -- (B) -- (F) -- (E) -- cycle;

 \fill[blue,opacity=0.1] (1,-0.3,0) -- (1,-0.3,2) -- (1,2.3,2) --  (1,2.3,0) -- node[midway,right]{} cycle;
\end{tikzpicture}

    }}
 \scalebox{0.8}{\subfloat[$\mathcal{A}=\{\{1,2\}, \{1,2,3\}\}$]{
   \begin{tikzpicture}
\coordinate (O) at (0,0,0);

\coordinate (A) at (0,\Width,0);

\coordinate (B) at (0,\Width,\Height);

\coordinate (C) at (0,0,\Height);

\coordinate (D) at (\Depth,0,0);

\coordinate (E) at (\Depth,\Width,0);

\coordinate (F) at (\Depth,\Width,\Height);

\coordinate (G) at (\Depth,0,\Height);

 \draw[->] (0,0,0) -- (3,0,0) node[below]{$x_2$};
 \draw[->] (0,0,0) -- (0,0,3) node[below]{$x_1$};
 \draw[->] (0,0,0) -- (0,3,0) node[left]{$x_3$};

\draw[blue,fill=red!10] (O) -- (C) -- (G) -- (D) -- cycle;
\draw[blue,fill=blue!30] (O) -- (A) -- (E) -- (D) -- cycle;
\draw[blue,fill=red!10] (O) -- (A) -- (B) -- (C) -- cycle;
\draw[blue,fill=red!20,opacity=0.8] (D) -- (E) -- (F) -- (G) -- cycle;
\draw[blue,fill=red!20,opacity=0.6] (C) -- (B) -- (F) -- (G) -- cycle;
\draw[blue,fill=red!20,opacity=0.8] (A) -- (B) -- (F) -- (E) -- cycle;

 \fill[blue,opacity=0.1] (2,-0.3,0.3) -- (1,-0.3,2) -- (1,2.3,2) --  (2,2.3,0.3) -- node[midway,right]{} cycle;
\end{tikzpicture}

    }}
         \caption{Two linearly separable collections of $2^{\{1, 2, 3\}}$ in $\mathbb{R}^{3}$. The subsets of $\{1, 2, 3\}$ are represented by the vertices of $\{0,1\}^3$. The blue hyperplanes represent a possible separation of the corresponding vertices, giving two different linearly separable collections. }   
    \label{fig:splitting_cubes}

\end{figure}


\begin{lemma}\label{lemma:4}
Let $k\geq 2$ and $w = (w_1, \ldots, w_k)$, and consider a NN of {\textup{LT}$_n^{w}(k)$} or $\SLT_n^w(k)$. {The output of each neuron in this neural network is the indicator function of a specific subset of $\R^n$.}
Suppose we fix the weights of the neural network up to the $(k-1)$-st hidden layer. This fixes the sets $Y_1, \ldots, Y_{w_{k-1}} \subseteq \R^n$ computed by the $w_{k-1}$ neurons in this layer. 


{Then the output of a neuron in the $k$-th layer is the indicator function of $X\subseteq \R^n$} (by adjusting the weights and bias of this neuron) if and only if there exists a linearly separable collection $\mathcal{A}$ of subsets of $\{1, \ldots, w_{k-1}\}$ such that:
$$X = \bigcup_{A \in \mathcal{A}} \left[ \, \left( \, \bigcap_{s \in A} Y_s \, \right)  \, \cap \, \left( \, \bigcap_{s \notin A} Y_s^{c} \, \right) \, \right]{.}$$ 
\end{lemma}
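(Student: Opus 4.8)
The plan is to prove the equivalence by examining exactly what a single neuron in the $k$-th hidden layer computes, as a function of $x \in \mathbb{R}^n$, once the first $k-1$ hidden layers are frozen. First I would record the preliminary claim by induction on the layers: a neuron applies the threshold activation to an affine function of the outputs of the previous layer, and the threshold of any real-valued function is the indicator of the set where that function is positive; hence every neuron computes $\mathbbm{1}_Z$ for some $Z \subseteq \mathbb{R}^n$. The crucial point, and the reason for the hypothesis $k \geq 2$, is that the inputs fed into a $k$-th layer neuron are the outputs $\mathbbm{1}_{Y_1}(x), \ldots, \mathbbm{1}_{Y_{w_{k-1}}}(x)$ of the $(k-1)$-st hidden layer, so this input vector $y(x) := (\mathbbm{1}_{Y_1}(x), \ldots, \mathbbm{1}_{Y_{w_{k-1}}}(x))$ lies in $\{0,1\}^{w_{k-1}}$ rather than being the raw (continuous) input.

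The key bookkeeping device I would introduce is the membership pattern $A(x) := \{ s \in \{1,\ldots,w_{k-1}\} : x \in Y_s\}$, i.e., the set of previous-layer neurons that ``fire'' at $x$. With this notation $y(x)$ is exactly the indicator vector of $A(x)$, and the sets $\{x : A(x) = A\} = (\bigcap_{s \in A} Y_s) \cap (\bigcap_{s \notin A} Y_s^c)$, as $A$ ranges over all subsets of $\{1,\ldots,w_{k-1}\}$, partition $\mathbb{R}^n$ (into at most $2^{w_{k-1}}$ cells, some possibly empty). A $k$-th layer neuron with weight vector $\alpha \in \mathbb{R}^{w_{k-1}}$ and bias $\beta$ outputs $\mathbbm{1}_{\{\langle \alpha, y(x)\rangle + \beta > 0\}}$, and the core identity to observe is $\langle \alpha, y(x)\rangle + \beta = \sum_{s \in A(x)} \alpha_s + \beta$, so whether the neuron fires at $x$ depends on $x$ only through its pattern $A(x)$.

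Both directions of the equivalence then fall out by matching this to the definition of linear separability. For the forward direction, given a neuron computing $\mathbbm{1}_X$ with parameters $\alpha, \beta$, I would define $\mathcal{A} := \{A \subseteq \{1,\ldots,w_{k-1}\} : \sum_{s \in A}\alpha_s + \beta > 0\}$, which is linearly separable by the very same $\alpha, \beta$; the identity above shows $x \in X$ iff $A(x) \in \mathcal{A}$, and unioning the cells $\{x : A(x) = A\}$ over $A \in \mathcal{A}$ gives precisely the claimed expression for $X$. For the converse, given a linearly separable $\mathcal{A}$ with witnesses $\alpha_1, \ldots, \alpha_{w_{k-1}}, \beta$, I would set the $k$-th layer neuron's weights and bias to these witnesses; by the same identity the neuron fires exactly on $\bigcup_{A \in \mathcal{A}}[(\bigcap_{s\in A} Y_s) \cap (\bigcap_{s \notin A} Y_s^c)]$, as required. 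The argument is identical for LT and SLT networks, since the shortcut connection only alters how the network's final output is assembled from the last hidden layer and does not change the indicator output of any individual hidden-layer neuron. There is no genuine obstacle here beyond care in setting up the pattern map $x \mapsto A(x)$ and observing that the cells it induces coincide with the intersections appearing in the statement; the whole lemma is essentially a restatement of the definition of linear separability transported through the frozen layers via this map.
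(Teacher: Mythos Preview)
Your proposal is correct and follows essentially the same approach as the paper: define $\mathcal{A}$ from the neuron's weights $\alpha,\beta$ for the forward direction, and use the linear-separability witnesses as the neuron's weights for the converse, with the key identity $\sum_s \alpha_s \mathbbm{1}_{Y_s}(x) + \beta = \sum_{s \in A(x)} \alpha_s + \beta$ driving both. Your explicit introduction of the pattern map $A(x)$ and the observation that the cells $\{x:A(x)=A\}$ partition $\mathbb{R}^n$ is a slightly cleaner packaging of the same argument the paper gives by direct inclusion-chasing.
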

\begin{proof}
Let $\alpha \in \mathbb{R}^{w_{k-1}}$, $\beta \in \mathbb{R}$ be the weights and bias of the neuron in the $k$-th layer.  By definition, the set represented by this neuron is
$$S_{\alpha, \beta} = \{x \in \mathbb{R}^{n}: \alpha_1 \mathbbm{1}_{ Y_1}(x) + \cdots + \alpha_{w_{k-1}}\mathbbm{1}_{ Y_{w_{k-1}}}(x) + \beta > 0\}{.}$$

We can suppose without loss of generality that $S_{\alpha, \beta}$ is non empty, otherwise the output of the neuron is always $0$ and the property is true by taking $\mathcal{A}=\emptyset$. Therefore, we define the collection $\mathcal{A} := \{ A \subseteq \{1, \cdots, w_{k-1}\}: \sum_{i \in A} \alpha_i  + \beta > 0\} $. Since $S_{\alpha, \beta}$ is non empty, $\mathcal{A}$ is non empty, and by definition, is a linearly separable collection. Now consider the set:
$$O:=\bigcup_{A \in \mathcal{A}} \left[ \, \left( \, \bigcap_{s \in A} Y_s \, \right)  \, \cap \, \left( \, \bigcap_{s \notin A} Y_s^{c} \, \right) \, \right]{.}$$
We will prove now $O=S_{\alpha, \beta}$. We first show $O \subseteq S_{\alpha, \beta}$. If $O=\emptyset$, this is trivial. Else, let $x  \in O$. Then there exists $A \in \mathcal{A}, \; A \subseteq \{1, \cdots, w_{k-1}\}$ such that $ x \in (\, \bigcap_{s \in A} Y_s \,) \, \cap \, (\, \bigcap_{s \notin A} Y_s^{c}\, ) $, and by definition of $\mathcal{A}$ $$\sum_{s \in A} \alpha_s \mathbbm{1}_{ Y_s}(x)  + \beta >  0{.}$$
The previous inequality implies that:
\begin{align*}
    \sum_{i=1}^{w_{k-1}} \alpha_k \mathbbm{1}_{ Y_k}(x)  + \beta & = \sum_{s \in A} \alpha_s \mathbbm{1}_{ Y_s}(x)   + \sum_{s \notin A} \alpha_s \mathbbm{1}_{ Y_s}(x) + \beta     \\
     &=  \sum_{s \in A} \alpha_s \mathbbm{1}_{ Y_s}(x)  + \beta >  0{.}
\end{align*}
The first equality holds because $x\notin Y_s $ if and only if  $ s\notin A$. This means that $x \in S_{\alpha, \beta}$, hence $O \subseteq S_{\alpha, \beta}$.

To show the reverse inclusion, let $x \in S_{\alpha, \beta}$. Then $ \alpha_1 \mathbbm{1}_{ Y_1}(x) + \cdots  + \alpha_{w_{k-1}}\mathbbm{1}_{ Y_{w_{k-1}}}(x)  + \beta >  0$.
Let $A := \{s: x \in Y_s\}$. Then:
$$ \sum_{s \in A } \alpha_s   + \beta = \sum_{i=1}^{w_{k-1}} \alpha_k \mathbbm{1}_{ Y_k}(x) + \beta  >   0 {,}$$

hence $A\in \mathcal{A}$, and by construction $x \in (\, \bigcap_{s \in A} Y_s ) \, \cap \, (\, \bigcap_{s \notin A} Y_s^{c}  \,) $, hence $x\in O$, and $S_{\alpha, \beta} \subseteq O$.

\medskip

Conversely, let $\mathcal{A}$ be a linearly separable collection of subsets of $\{1, \cdots, w_{k-1}\}$. By definition there exists $\alpha \in \mathbb{R}^{n}$ and $\beta \in \mathbb{R}$ such that $A \in \mathcal{A} $ if and only if $ \sum_{s \in A }  \alpha_s + \beta > 0  $. $\alpha$ and $\beta$ can be chosen as the weights of the neuron in the $k$-th hidden layer and its output is the function 
$\mathbbm{1}_{\{x\in \R^n\;:\;\sum_{i=1}^{w_{k-1}}\alpha_i \mathbbm{1}_{ Y_i}(x) + \beta > 0\}}$.
\end{proof}

The following is a corollary of Lemma~\ref{lemma:4}, which indicates that breakpoints are non-increasing as we proceed through the hidden layers.
\begin{corollary}\label{lemma:breakpointsbetweenlayers}
    Let $k\geq 2$ and $w = (w_1, \ldots, w_k)$, and consider a NN of $\LTw$ or $\SLT_n^w(k)$. Then the breakpoints of the output of any neuron in the $j$-th layer are included in the union of the breakpoints of the neurons of the $ (j-1)$-st layer, where $2 \leq j \leq k$.
\end{corollary}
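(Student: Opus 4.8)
The plan is to derive the corollary directly from Lemma~\ref{lemma:4} by a purely local argument: a point can be a breakpoint of the output of a layer-$j$ neuron only if at least one of the indicator functions feeding into it already has a breakpoint there.

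First I would apply Lemma~\ref{lemma:4} with $k$ replaced by $j$. The proof of that lemma is entirely local and only uses how one layer combines the indicator outputs of the preceding layer, so it is valid for any consecutive pair of layers with $2 \leq j \leq k$, not merely the last two. Fixing the weights up to the $(j-1)$-st layer fixes the sets $Y_1, \ldots, Y_{w_{j-1}} \subseteq \R^n$ computed by its neurons, and the output of any neuron in the $j$-th layer is $\mathbbm{1}_X$, where
$$X = \bigcup_{A \in \mathcal{A}} \left[ \left( \bigcap_{s \in A} Y_s \right) \cap \left( \bigcap_{s \notin A} Y_s^c \right) \right]$$
for some linearly separable collection $\mathcal{A}$ of subsets of $\{1, \ldots, w_{j-1}\}$.

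Next I would prove the contrapositive. Let $x \in \R^n$ be a point that is \emph{not} a breakpoint of any of $\mathbbm{1}_{Y_1}, \ldots, \mathbbm{1}_{Y_{w_{j-1}}}$. For each $s$ there is an $\epsilon_s > 0$ such that $\mathbbm{1}_{Y_s}$ is constant on the ball $B(x, \epsilon_s)$; since there are finitely many neurons in layer $j-1$, the quantity $\epsilon := \min_s \epsilon_s$ is strictly positive. On $B(x,\epsilon)$ every $\mathbbm{1}_{Y_s}$ is constant, so membership in each $Y_s$ agrees at every point of the ball with membership at $x$. Writing $A^\ast := \{ s : x \in Y_s \}$, every $y \in B(x,\epsilon)$ therefore lies in the single cell $\bigcap_{s\in A^\ast} Y_s \cap \bigcap_{s\notin A^\ast} Y_s^c$, and hence $\mathbbm{1}_X$ equals the constant $\mathbbm{1}_{\{A^\ast \in \mathcal{A}\}}$ throughout $B(x,\epsilon)$. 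Thus $x$ is a regular point of $\mathbbm{1}_X$. Taking contrapositives, every breakpoint of the layer-$j$ neuron is a breakpoint of some layer-$(j-1)$ neuron, which is exactly the asserted inclusion.

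The argument is essentially routine once Lemma~\ref{lemma:4} is in hand; the only points requiring care are (i) justifying that Lemma~\ref{lemma:4} applies verbatim to an arbitrary consecutive pair of layers rather than only to the final two, and (ii) recalling that for a $\{0,1\}$-valued function, ``affine linear on a ball'' in the definition of a regular point is equivalent to ``constant on that ball,'' so that simultaneous constancy of the $\mathbbm{1}_{Y_s}$ indeed yields regularity of $\mathbbm{1}_X$. I do not expect either step to pose a genuine obstacle.
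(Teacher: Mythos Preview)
Your proof is correct and uses the same starting point as the paper---Lemma~\ref{lemma:4}---but finishes by a different route. The paper translates ``breakpoint of $\mathbbm{1}_A$'' into ``$x \in \partial A$'' and then argues purely topologically, chaining the inclusions $\partial(\bigcup_i A_i) \subseteq \bigcup_i \partial A_i$, $\partial(\bigcap_i A_i) \subseteq \bigcup_i \partial A_i$, and $\partial(A^c)=\partial A$ to obtain $\partial X \subseteq \bigcup_i \partial Y_i$. Your argument instead stays with the pointwise definition of regularity: if all $\mathbbm{1}_{Y_s}$ are constant on a common ball around $x$, then the cell $A^\ast = \{s : x \in Y_s\}$ is fixed on that ball, so $\mathbbm{1}_X$ is constant there. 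This is slightly more self-contained (no need to invoke the boundary facts), while the paper's version makes the set-theoretic structure of the inclusion more visible. Both are short and routine once Lemma~\ref{lemma:4} is available, and your two caveats (i) and (ii) are handled correctly.
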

\begin{proof}
    {Let $\mathbbm{1}_{Y_1},\dots,\mathbbm{1}_{Y_{w_{j-1}}}$ be the functions computed by neurons in the $(j-1)$-st layer, where $2 \leq j \leq k$. Consider any neuron in the $j$-th layer, suppose it computes $\mathbbm{1}_X$.} By Lemma~\ref{lemma:4}, there exists a linearly separable collection $\mathcal{A}$ of subsets of $\{1,\ldots,w_{j-1}\}$ such that 
    \begin{equation*}
    X = \bigcup_{A \in \mathcal{A}}\left[ \left( \bigcap_{s \in A}Y_s \right) \cap \left( \bigcap_{s \notin A}Y_s^c \right) \right],
    \end{equation*}
    then we have
    \begin{align*}
        \partial X &\subseteq \bigcup_{A \in \mathcal{A}} \partial \left[ \left( \bigcap_{s \in A}Y_s \right) \cap \left( \bigcap_{s \notin A}Y_s^c \right) \right]\\
        &\subseteq \bigcup_{A \in \mathcal{A}}  \left[ \partial\left( \bigcap_{s \in A}Y_s \right) \cup \partial \left( \bigcap_{s \notin A}Y_s^c \right) \right]\\
        &\subseteq \bigcup_{A \in \mathcal{A}}  \left[ \left( \bigcup_{s \in A}\partial Y_s \right) \cup \left( \bigcup_{s \notin A} \partial Y_s^c \right) \right]\\
        &= \bigcup_{A \in \mathcal{A}}  \left[ \left( \bigcup_{s \in A}\partial Y_s \right) \cup \left( \bigcup_{s \notin A} \partial Y_s \right) \right]\\
        &= \bigcup_{i=1}^{w_{j-1}} \partial Y_i.
    \end{align*}
    For any nonempty set $A \subseteq \R^n$, the set of breakpoints of $\mathbbm{1}_A$ is $\partial A$, so the above inclusion ends the proof.
\end{proof}

\begin{definition} For any single neuron with a linear threshold activation with $k$ inputs, the output is the indicator of an open halfspace, i.e., $\mathbbm{1}_{\{x\in \R^k : \langle a, x \rangle + b > 0\}}$ for some $a \in \R^k$ and $b\in \R$. We say that $\{x\in \R^k : \langle a, x \rangle + b = 0\}$ is the {\em hyperplane associated with this neuron}. 
\end{definition}

\section{Main Proofs (Linear Threshold DNNs)}\label{sec:2}

\medskip

\noindent \textbf{Proof of Theorem~\ref{theorem:1}}
 


\medskip

\begin{restatable}{proposition}{propositionone}
\label{proposition:1}
${\textup{LT}_1(1)} = \textup{PWC}_1$, i.e., linear threshold neural networks with a single hidden layer can compute any piecewise constant function $\mathbb{R}\rightarrow \mathbb{R}$. {Furthermore, given that $f \in \textup{PWC}_1 $, it follows that}
\begin{equation*}
  {\min_{N \in \mathcal{N}(\textup{LT}_1(1),f)}|N| \leq 3|\mathcal{P}_f^*|+1.}
\end{equation*}
\end{restatable}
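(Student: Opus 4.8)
The plan is to prove the two inclusions $\textup{LT}_1(1) \subseteq \textup{PWC}_1$ and $\textup{PWC}_1 \subseteq \textup{LT}_1(1)$ separately, reading the size bound off the explicit construction used for the second inclusion. The first inclusion is immediate: a single hidden neuron computes $\mathbbm{1}_{\{ax+b>0\}}$, which in one dimension is the indicator of an open halfline $\{x>t\}$ (when $a>0$) or $\{x<t\}$ (when $a<0$); the output layer returns an affine combination $d + \sum_i c_i\,\mathbbm{1}_{H_i}$ of such halfline indicators, and any such combination is visibly piecewise constant. So I would dispose of $\textup{LT}_1(1)\subseteq\textup{PWC}_1$ in one line.

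For the reverse inclusion, fix $f \in \textup{PWC}_1$ and let $\mathcal{P}^*_f$ be a smallest compatible complex. In one dimension $\mathcal{P}^*_f$ consists of $p$ closed intervals (the full-dimensional cells, two of them unbounded) together with the $p-1$ breakpoints $t_1 < \cdots < t_{p-1}$ separating them, so $|\mathcal{P}^*_f| = 2p-1$; write $v_1, \ldots, v_p$ for the values of $f$ on the open intervals and $u_1, \ldots, u_{p-1}$ for its (possibly different) values at the breakpoints. Since a single-hidden-layer threshold network computes exactly the affine combinations $d + \sum_i c_i\,\mathbbm{1}_{H_i}$ identified above, it suffices to write $f$ in this form and bound the number of halfline indicators used.

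The construction proceeds in two stages. First, matching the open-interval values is handled by step neurons: setting $f_0 = v_1 + \sum_{j=1}^{p-1}(v_{j+1}-v_j)\,\mathbbm{1}_{\{x>t_j\}}$ produces a function agreeing with $f$ on every open interval and taking the left value $v_j$ at each breakpoint $t_j$, using $p-1$ neurons. Second, I would correct the breakpoint values: since $\mathbbm{1}_{\{x=t_j\}} = 1 - \mathbbm{1}_{\{x<t_j\}} - \mathbbm{1}_{\{x>t_j\}}$ expresses the indicator of a single point through one ``$<$'' and one ``$>$'' halfline neuron, adding $\sum_{j=1}^{p-1}(u_j-v_j)\,\mathbbm{1}_{\{x=t_j\}}$ fixes every breakpoint value while leaving the open intervals untouched (each point indicator vanishes off $t_j$). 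Reusing the neurons $\mathbbm{1}_{\{x>t_j\}}$ already present from the first stage and folding all constant offsets into the output bias $d$, this adds only the $p-1$ neurons $\mathbbm{1}_{\{x<t_j\}}$, so the network uses at most $2(p-1) \le 2|\mathcal{P}^*_f|$ neurons, comfortably within the claimed $3|\mathcal{P}^*_f|+1$; if one instead prefers the uniform ``one indicator per cell of $\mathcal{P}^*_f$'' presentation (two halfline neurons per bounded interval and per breakpoint, one per unbounded ray), the same bookkeeping still lands below $3|\mathcal{P}^*_f|+1$.

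The main obstacle is purely the behavior at the breakpoints, i.e., on the measure-zero set of $0$-dimensional cells. Because the threshold activation is strict, step neurons alone can realize only the left-continuous version of $f$, so the prescribed values $u_j$ must be installed by hand; the enabling observation is that a single point's indicator decomposes into one left-facing and one right-facing halfline indicator, which is precisely where the extra neurons (and the slack in the constant factor) originate. In writing up the details I would state the halfline identities carefully, in particular $\mathbbm{1}_{\{x \ge t\}} = 1 - \mathbbm{1}_{\{x<t\}}$ reflecting the strictness of $\sigma$, and verify that none of the reused neurons perturb the values already set on the open intervals.
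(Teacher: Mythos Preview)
Your proof is correct and follows the same essential idea as the paper's: represent $f$ as an affine combination of open-halfline indicators, handling the breakpoint values via the identity $\mathbbm{1}_{\{x=t\}} = 1 - \mathbbm{1}_{\{x<t\}} - \mathbbm{1}_{\{x>t\}}$ (the paper calls this the Dirac $\delta_t$). The organization differs, however. The paper builds the indicator of the relative interior of \emph{each} cell of $\mathcal{P}_f^*$ separately (open intervals via $\mathbbm{1}_{]a,b[} = \mathbbm{1}_{\{x<b\}} - \mathbbm{1}_{\{x<a\}} - \delta_a$, singletons via $\delta_a$), then takes the value-weighted sum; its count is one shared neuron for $\mathbbm{1}_{\mathbb{R}}$ plus at most three neurons per cell, giving $3|\mathcal{P}_f^*|+1$. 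You instead work globally: a telescoping sum $v_1 + \sum_j (v_{j+1}-v_j)\mathbbm{1}_{\{x>t_j\}}$ handles all open intervals at once with $p-1$ neurons, and the breakpoint corrections reuse those same $\mathbbm{1}_{\{x>t_j\}}$ neurons, adding only $p-1$ more. This yields the sharper count $2(p-1)$, well under $3|\mathcal{P}_f^*|+1 = 6p-2$. Your route is more economical; the paper's cell-by-cell approach is more modular and parallels the higher-dimensional construction in Lemmas~\ref{lemma:2}--\ref{lemma:3}. One small caveat: for the degenerate case $p=1$ (constant $f$) your count gives $0$, but a hidden layer must have at least one neuron; this is harmless for the stated bound since $3|\mathcal{P}_f^*|+1 \ge 4$.
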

\begin{proof}
Let $f: \mathbb{R}\rightarrow \mathbb{R}$ a piecewise constant function. By definition, {the union of polyhedra in $\mathcal{P}_f^*$ is $\R$} 
and such that $f$ is constant on the relative interior of each of the polyhedra. In $\mathbb{R}$, non empty polyhedra are either reduced to a point, or they are the intervals of the form $[a,b]$, $]-\infty, a]$ , $[a, +\infty[$  with $a \leq  b \in \mathbb{R}$, or $\mathbb{R}$ itself. We first show that we can compute the indicator function on each of the interior of those intervals with at most two neurons. The interior of $[a, +\infty[$, $]-\infty, b]$ or $\mathbb{R}$ can obviously be computed by one neuron (e.g. $x \mapsto \mathbbm{1}_{\{ax < 0\}}$ with $a=0$ for  $\mathbb{R}$). The last cases (singletons and polyhedron of the form $[a,b]$) requires a more elaborate construction. To compute the function $\mathbbm{1}_{\{x \in ]a,b[ \}}  $, it is sufficient to implement a Dirac function, since $\mathbbm{1}_{\{x \in ]a,b[\}} =  \mathbbm{1}_{\{ x < b \}} - \mathbbm{1}_{\{ x < a\}} - \delta_a(x) $ where $\delta_{a}$ is the Dirac in $a \in \mathbb{R}$, i.e, $\delta_a: \mathbb{R} \rightarrow \mathbb{R}, \; x \mapsto \mathbbm{1}_{\{ x = a\}}$.  
  $\delta_a$ can be computed by a linear combination of three neurons, since $g_a: \mathbb{R} \rightarrow \mathbb{R}, \; x \mapsto  \mathbbm{1}_{ \mathbb{R} } - (\mathbbm{1}_{\{ x < a\}} + \mathbbm{1}_{\{x> a \} }) $ is equal to $\delta_a$. Using a linear combination of the basis functions (polyhedra and faces), we can compute exactly $f$. To show that {$3|\mathcal{P}_f^*|+1$} neurons suffice, $\mathbbm{1}_{\mathbb{R}}$ is computed with one shared neuron, and then $3$ other neurons are needed at most for one polyhedron using our construction.   
\end{proof}



We next show that starting with two dimensions, linear threshold NNs with a single hidden layer cannot compute every possible piecewise constant function. 

\begin{proposition}\label{proposition:2}
Let $C_2:=\{(x_1, x_2)\in \mathbb{R}^{2} \; | \;   0 \leq x_1, x_2 \leq 1\}$. Then $\mathbbm{1}_{C_2}$ cannot be represented by any linear threshold neural network with one hidden layer.
\end{proposition}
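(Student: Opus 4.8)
The plan is to combine the explicit functional form of a single-hidden-layer linear threshold network with a local analysis of how $f$ jumps across a fixed line. First I would record the structure of the hypothesis class: a network in $\textup{LT}_2(1)$ of hidden width $m$ computes $f = T_2 \circ \sigma \circ T_1$, where the $i$-th hidden neuron outputs $\mathbbm{1}_{H_i}$ for an open halfspace $H_i = \{x : \langle a_i, x\rangle + b_i > 0\}$ and $T_2$ is affine. Hence every $f \in \textup{LT}_2(1)$ has the form $f(x) = \sum_{i=1}^m c_i \mathbbm{1}_{H_i}(x) + d$ for reals $c_i, d$. I would then assume, for contradiction, that $\mathbbm{1}_{C_2} = \sum_{i=1}^m c_i \mathbbm{1}_{H_i} + d$.

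Next I fix the horizontal line $L = \{x : x_2 = 0\}$ carrying the bottom edge of the square and split the halfspaces into $I_L = \{i : \partial H_i = L\}$ and its complement. For $i \in I_L$, $H_i$ is either $\{x_2 > 0\}$ or $\{x_2 < 0\}$, while for $i \notin I_L$ the boundary line $\partial H_i$ is either parallel to $L$ or crosses it in a single point, so $\bigcup_{i \notin I_L} \partial H_i$ meets $L$ in only finitely many points. I call $p \in L$ \emph{generic} if it lies on no $\partial H_i$ with $i \notin I_L$. The key claim is that the directional jump across $L$ at a generic point is a single constant: for generic $p$ and all sufficiently small $\delta > 0$, the points $p \pm \delta e_2$ (with $e_2 = (0,1)$) lie strictly on the same side of every $\partial H_i$, $i \notin I_L$, so those indicators agree at $p + \delta e_2$ and $p - \delta e_2$ and cancel in the difference. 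Only the halfspaces in $I_L$ survive, giving $f(p + \delta e_2) - f(p - \delta e_2) = \sum_{i \in I_L} \eta_i c_i =: J$, where $\eta_i = +1$ if $H_i = \{x_2 > 0\}$ and $\eta_i = -1$ if $H_i = \{x_2 < 0\}$; crucially $J$ is independent of the generic $p$.

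Finally I would evaluate this invariant $J$ at two kinds of generic points, which exist because only finitely many points of $L$ are excluded. Taking a generic $p_1 = (s,0)$ with $0 < s < 1$, the crossing passes from outside to inside $C_2$, so $\mathbbm{1}_{C_2}(p_1 + \delta e_2) - \mathbbm{1}_{C_2}(p_1 - \delta e_2) = 1$, forcing $J = 1$. Taking a generic $p_2 = (t,0)$ with $t > 1$, both $p_2 \pm \delta e_2$ lie outside $C_2$, so the jump is $0$, forcing $J = 0$. The resulting contradiction $1 = J = 0$ shows $\mathbbm{1}_{C_2}$ admits no such representation.

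The main obstacle is the middle step: rigorously isolating the finitely many intersection points of $L$ with the other boundary lines and arguing that at every remaining (generic) point the local jump across $L$ depends only on the halfspaces whose boundary is exactly $L$. Once this constancy of $J$ is established, the contradiction is immediate from the two evaluations. Everything else is routine once the affine-combination-of-halfspace-indicators form of $\textup{LT}_2(1)$ is in hand.
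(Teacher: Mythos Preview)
Your proof is correct and follows essentially the same idea as the paper's: both exploit that the discontinuities of a single-hidden-layer linear threshold network must extend along full lines in $\R^2$, which is incompatible with the boundary of $C_2$ consisting of finite segments. The paper phrases this globally (after passing to a minimal representation, ``the set of breakpoints for $g$ is a union of lines''), whereas you give a more careful local version by showing the jump across $L=\{x_2=0\}$ is a constant $J$ at generic points and then evaluating $J$ at two locations; your execution is more rigorous, but the core insight is the same.
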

\begin{proof}
Consider any piecewise constant function on $\R^2$ represented by a single hidden layer neural network. Let $g:= x \mapsto \sum_{i=1}^{m}\alpha_i \mathbbm{1}_{\{x \in \R^2\;:\; \langle a_i, x \rangle + b_i< 0\}}$ with $\alpha_1, \cdots, \alpha_m \in \mathbb{R}$, $a_1, \ldots, a_m \in \R^2$ and $b_1, \ldots, b_m \in \R$, be a single hidden layer neural network with the smallest size. This implies that if $i\neq j$ then the halfspace $\{x \in \R^2: \langle a_i, x\rangle + b_i > 0\}$ is different from the halfspace $\{x \in \R^2: \langle a_j, x\rangle + b_j > 0\}$. Otherwise, we may either replace the two corresponding neurons with a single neuron with weight $\alpha = \alpha_i + \alpha_j$ and we would obtain a smaller neural network. 
This implies that the set of breakpoints for $g$ is a union of lines in $\R^2$. However, the set of breakpoints $\mathbbm{1}_{C_2}$ is formed by the sides of the cube, which is a union of finite length line segments. This shows that $\mathbbm{1}_{C_2}$ cannot be represented by a single hidden layer linear threshold NN.
\end{proof}

{ In the two following Lemmata, we assemble the last pieces towards a complete proof of Theorem~\ref{theorem:1} which states that $2$ hidden layers actually suffice to represent any piecewise constant function in $\textup{PWC}_n$. }

\begin{lemma}\label{lemma:2}
Let $P$ be a polyhedron in $\mathbb{R}^{n}$ given by the intersection of $m$ halfspaces. 
{Then, $\mathbbm{1}_P \in \textup{LT}_n(2)$ and }
\begin{equation*}
  {\min_{N \in \mathcal{N}(\textup{LT}_n(2),\mathbbm{1}_P)} |N| \leq m+1.}
\end{equation*}
\end{lemma}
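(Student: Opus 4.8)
The plan is to exploit the fact that a single threshold neuron computes the indicator of an \emph{open} halfspace, and to absorb the closedness of $P$ by passing to complements. Write $P = \bigcap_{i=1}^m H_i$, where each $H_i = \{x \in \R^n : \langle a_i, x \rangle \leq b_i\}$ is a closed halfspace. First I would place $m$ neurons in the first hidden layer, with neuron $i$ computing $y_i(x) = \mathbbm{1}_{\{\langle a_i, x\rangle - b_i > 0\}}$, i.e.\ the indicator of the open complement $H_i^c$. The key point is that on the boundary hyperplane $\langle a_i, x\rangle = b_i$ the strict inequality fails, so the neuron outputs $0$; this correctly records every boundary point as lying \emph{inside} the closed halfspace $H_i$, which is exactly what we need since $P$ contains its boundary.

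Next I would observe that $x \in P$ if and only if $x \in H_i$ for every $i$, i.e.\ if and only if $y_i(x) = 0$ for all $i$. Since the $y_i$ are nonnegative integers, this is equivalent to $\sum_{i=1}^m y_i(x) = 0$. I would therefore use a single neuron in the second hidden layer computing $z(x) = \mathbbm{1}_{\{\sum_{i=1}^m y_i(x) > 0\}}$ (weights all equal to $1$, bias $0$), which equals $1$ precisely when at least one constraint is violated and $0$ when $x \in P$; that is, $z = \mathbbm{1}_{P^c}$. Finally, the output affine transformation $T_3(z) = 1 - z$ produces $\mathbbm{1}_P$, completing the network $f = T_3 \circ \sigma \circ T_2 \circ \sigma \circ T_1$ with $k=2$ hidden layers.

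Counting neurons, this uses $m$ neurons in the first hidden layer and $1$ in the second, for a total size of $m+1$; this simultaneously establishes $\mathbbm{1}_P \in \textup{LT}_n(2)$ and the bound $\min_{N \in \mathcal{N}(\textup{LT}_n(2),\mathbbm{1}_P)} |N| \leq m+1$. The only delicate point — and indeed the crux of the argument — is the mismatch between the open halfspaces a threshold neuron can realize and the closed polyhedron $P$; routing each constraint through its open complement $H_i^c$ in the first layer dispatches this cleanly, so no genuine obstacle remains beyond verifying the boundary behavior noted above.
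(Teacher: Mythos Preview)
Your proof is correct and is essentially the paper's own argument: both place $m$ first-layer neurons computing the indicators $\mathbbm{1}_{\{\langle a_i,x\rangle > b_i\}}$ of the violated constraints, sum them, and use a single second-layer neuron to threshold the sum. The only cosmetic difference is that the paper's second-layer neuron computes $\mathbbm{1}_{\{\sum_i y_i < 1\}} = \mathbbm{1}_P$ directly (weight $-1$, bias $1$), whereas you compute $\mathbbm{1}_{\{\sum_i y_i > 0\}} = \mathbbm{1}_{P^c}$ and then flip via the output affine map $z \mapsto 1-z$; the neuron count and the underlying idea are identical.
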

\begin{proof}
Let $P$ a polyhedron, i.e. $P= \{x \in \mathbb{R}^{n} \;| \; A x \leq b \}$ with $A =(a_1, \cdots, a_m)^{\top}\in \mathbb{R}^{m\times n}$ and $b=(b_1, \cdots, b_m) \in \mathbb{R}^{m}$. Let us consider the $m$ neurons $ (\phi_i: x \mapsto \mathbbm{1}_{\{x \in \R^n:\;\langle a_i, x \rangle > b_i\}} )_{1\leq i \leq m}$, and $\phi: x \mapsto \sum_{i} \phi_i (x)$. Then for all $x \in \mathbb{R}^{n}$, $\phi(x) < 1$ if and only if $x \in P$. Now, defining $\psi : y \mapsto \mathbbm{1}_{\{y \in \R:\; y < 1\}} $ yields $\psi \circ \phi = \mathbbm{1}_{P}$. $\psi$ can obviously be computed with a neuron. Therefore, one can compute $\mathbbm{1}_P$ with $m$ neurons in the first hidden layer and one neuron in the second, which {provides a construction of $\mathbbm{1}_P$ with $m+1$ neurons in total, and} proves the result.
\end{proof}


\begin{lemma}\label{lemma:3}
  Let $P$ be a polyhedron in $\mathbb{R}^{n}$.  Then {$\mathbbm{1}_{\mathring{P}}$} can be computed with a two hidden layer { linear threshold NN}, using the indicator of $P$ and the indicators of its faces. 


\end{lemma}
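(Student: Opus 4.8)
The plan is to write $\mathbbm{1}_{\mathring{P}}$ explicitly as a signed sum of the closed-face indicators and then realize that sum as the output of a single two-hidden-layer network. Concretely, I claim the identity
$$\mathbbm{1}_{\mathring{P}} \;=\; \sum_{F}(-1)^{\dim P - \dim F}\,\mathbbm{1}_{F},$$
where $F$ ranges over all nonempty faces of $P$, with $P$ itself included as the top face (contributing the term $+\mathbbm{1}_P$). Granting this, each closed face $F$ is again a polyhedron, since it is cut out from $P$ by turning some of the defining inequalities into equalities, so by Lemma~\ref{lemma:2} its indicator $\mathbbm{1}_F$ lies in $\textup{LT}_n(2)$, computed by a first layer of halfspace neurons feeding a single comparison neuron. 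Placing the constructions for all faces in parallel, with all halfspace neurons in the first hidden layer and one comparison neuron per face in the second hidden layer, and letting the output affine map take the signed combination with coefficients $(-1)^{\dim P - \dim F}$, yields a single two-hidden-layer linear threshold NN computing $\mathbbm{1}_{\mathring{P}}$, as required.

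The identity itself I would verify pointwise. For $x \notin P$ both sides are $0$. For $x \in P$, I use the standard fact that the relative interiors of the faces of $P$ partition $P$, so $x$ lies in $\mathring{G}$ for a unique face $G$, which is then the smallest face of $P$ containing $x$; consequently a closed face $F$ contains $x$ exactly when $G \preceq F$. Hence the right-hand side collapses to $\sum_{G \preceq F \preceq P}(-1)^{\dim P - \dim F}$. When $G = P$, that is, $x \in \mathring{P}$, the only surviving term is $F = P$ and the sum equals $1$, matching the left-hand side; when $G$ is a proper face the left-hand side is $0$, so it remains to show the sum over the interval vanishes.

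The main obstacle, and the crux of the argument, is thus the combinatorial vanishing
$$\sum_{F:\,G \preceq F \preceq P}(-1)^{\dim F} = 0 \qquad \text{for every proper face } G \prec P.$$
This is exactly the Euler--Poincar\'e relation applied to the interval $[G,P]$ of the face lattice of $P$: this interval is isomorphic to the face lattice of a polytope of dimension $\dim P - \dim G - 1$ (the iterated vertex figure, or quotient), and the alternating sum of its face numbers is its reduced Euler characteristic, which is $0$. I would invoke this as a standard result of polyhedral combinatorics, pausing to note that it remains valid when $P$ is unbounded, since the quotient $[G,P]$ is still the face lattice of a bounded polytope even though $P$ need not be.

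Everything else is routine: that each face is a polyhedron amenable to Lemma~\ref{lemma:2}, and that parallel composition preserves the two-hidden-layer depth. The entire difficulty is therefore concentrated in establishing the Euler relation above, equivalently the M\"obius-function identity $\mu(F,P) = (-1)^{\dim P - \dim F}$ for the face lattice of $P$, from which the explicit coefficients in the claimed identity are read off directly.
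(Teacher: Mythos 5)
Your proof is correct, and it takes a genuinely different route from the paper's. The paper avoids any Euler-characteristic machinery: it writes $\mathbbm{1}_{\mathring{P}} = \mathbbm{1}_{P} - \mathbbm{1}_{f_1\cup\cdots\cup f_l}$, where $f_1,\ldots,f_l$ are the facets of $P$, expands the union by inclusion--exclusion, and observes that every intersection $f_{i_1}\cap\cdots\cap f_{i_j}$ is empty or a face of $P$, so each term is handled by Lemma~\ref{lemma:2} and the signed sum is absorbed into the output layer --- the same parallel-composition step you use. You instead assert the canonical M\"obius/Euler expansion $\mathbbm{1}_{\mathring{P}}=\sum_{F}(-1)^{\dim P-\dim F}\mathbbm{1}_{F}$ over all nonempty faces and verify it pointwise: the reduction to the interval $[G,P]$ via the smallest face $G$ containing $x$ is correct, and the vanishing of the alternating sum over $[G,P]$ for proper $G$ is indeed the Euler--Poincar\'e relation for the quotient polytope; your remark that $[G,P]$ remains a polytope face lattice even when $P$ is unbounded (via the vertex-figure, i.e.\ tangent-cone quotient) is exactly the point that needs care, and you flag and resolve it correctly. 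Comparing the two: the paper's argument is more elementary and self-contained, at the price of an inclusion--exclusion expansion over up to $2^{l}$ subsets of facets that must then be recognized as (possibly repeated) faces; yours is cleaner and sharper --- exactly one $\pm 1$ term per face, read off from $\mu(F,P)=(-1)^{\dim P-\dim F}$, hence a more transparent size count --- but it imports a standard yet deeper fact of polyhedral combinatorics whose extension beyond bounded polytopes has to be justified. Both proofs terminate in the identical architecture: halfspace neurons in the first hidden layer, one neuron per face in the second (Lemma~\ref{lemma:2}), and the signed combination implemented by the affine output map.
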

\begin{proof}
Let $P$ be a polyhedron. First, we always have $\mathbbm{1}_{\mathring{P}} = \mathbbm{1}_{P} - \mathbbm{1}_{ \text{Union of facets of P}} $. Therefore it is sufficient to prove that we can implement $\mathbbm{1}_{\text{Union of facets of P}}$ for any $P$. Using the inclusion exclusion principle on indicator functions, suppose that the facets of $P$ are $f_1, \cdots, f_l$, then:
$$\mathbbm{1}_{\bigcup_{j=1}^{l}f_j} = \sum_{j=1}^{l} (-1)^{j+1} \sum_{1 \leq i_1 < \cdots < i_j \leq l } \mathbbm{1}_{f_{i_1} \cap \cdots \cap f_{i_j} }{.}$$

It should be noted that  for any $j \in \{1, \cdots, l\}, \;  F = f_{i_1} \cap \cdots \cap f_{i_j}$ is either empty, or a face of $P$, hence a polyhedron of dimension lower or equal to $\dim(P)-1 $. Therefore, using Lemma \ref{lemma:2}, we can implement $F$ with a two hidden neural network with at most $m+1$ neurons, where $m$ is the number of halfspaces in an inequality description of $P$. If $s$ is the number of faces of $P$, then there are at most $s$ polyhedra to compute.
\end{proof}

\begin{proof}[Proof of Theorem~\ref{theorem:1}]
By definition, in order to represent $f\in \text{PWC}_n$ it suffices to compute the indicator function of the relative interior of each polyhedron in one of its { smallest} polyhedral complex {$\mathcal{P}_f^*$}. This can be achieved with just two hidden layers using Lemma \ref{lemma:3}. This establishes the equalities in the statement of the theorem. The strict containment $\text{LT}_n(1) \subsetneq \text{LT}_n(2)$ is given by Proposition~\ref{proposition:2}.


Let $m$ be the total number of halfspaces used in an inequality description of all the polyhedra in the polyhedral complex {$\mathcal{P}_f^*$}. Since all faces are included in the polyhedral complex, there exists an inequality description with {$m \leq 2|\mathcal{P}_f^*|$}. The factor 2 appears because for each facet of a full-dimensional polyhedron in {$\mathcal{P}_f^*$}, one may need both directions of the inequality that describes this facet. Then the construction in the proofs of Lemmas~\ref{lemma:2} and~\ref{lemma:3} show that one needs at most {$m \leq 2|\mathcal{P}_f^*|$} neurons in the first hidden layer and at most {$|\mathcal{P}_f^*|$} neurons in the second hidden layer.\end{proof}



\noindent \textbf{Proof of Proposition~\ref{PROPOSITION:LOWER_BOUND_CLASS}}

\begin{proof}[Proof of Proposition~\ref{PROPOSITION:LOWER_BOUND_CLASS}]
Consider the sets $P_1 := \{x \in \mathbb{R}^{n}: x_1 \leq 0 \}$,
 $P_i := \{x \in \mathbb{R}^{n}: (i-2) < x_1 \leq i-1\}$ for $i \in \{2, \cdots, m-1\}$, and $P_m := \{x \in \mathbb{R}^{n}: x_1 > m-2  \}$. Note that $\bigcup_{i=1}^{m}P_i=\mathbb{R}^{n}$. {Let $f \in \text{PWC}_{n}$ be such that $f(x) = i$ for all $x \in P_i$, 
 where $i \in \{1,\dots,m\}$. It is easy to see that $\mathcal{P}_f^* = \{P_1,\dots,P_m\}$}, and that the breakpoints of $f$ is a set of $m-1$ hyperplanes, with empty pairwise intersections. 
 {By Theorem~\ref{theorem:1}, $f \in \textup{LT}_n(2)$, and according to Corollary~\ref{lemma:breakpointsbetweenlayers}, any neural network $N \in \mathcal{N}(\textup{LT}_n(2),f)$
 must have these hyperplanes associated with neurons in the first hidden layer, necessitating at least $m-1$ neurons in this layer. Taking into account the neurons in the subsequent layer, we establish that $|N|\geq m = |\mathcal{P}_f^*|$.}
\end{proof}

{The proofs of Proposition~\ref{th:upperboundSLTn} and Theorem~\ref{th:a.e.SLTn} rely on some facts from polyhedral geometry, which are incorporated into the subsequent lemma.

\begin{lemma} \label{lemma:BoundForPolyhedraComplex}
    Let $\mathcal{P}$ be a finite polyhedral complex in $\R^n$ with $\full(\mathcal{P}) = \{P_1,\dots,P_m\}$, where $m \in \N_+$. If the union of all polyhedra in $\mathcal{P}$ equals $\R^n$, then the following statements are all true.
    \begin{enumerate}
        \item $\bigcup_{i=1}^m P_i = \R^n$.
        \item For any $k$ dimensional polyhedron $F \in \mathcal{P}$ with $0 \leq k \leq n$, there exist $n-k+1$ distinct full-dimensional polyhedra in the complex whose common intersection equals $F$.
        \item $m \leq \vert \mathcal{P} \vert < \left( \frac{e m}{n+1} \right)^{n+1},$ where $e\approx 2.71828$ is Euler's number.
    \end{enumerate}
\end{lemma}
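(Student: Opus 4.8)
The three parts are naturally proved in order, with part (2) carrying the real content and parts (1) and (3) following from it together with elementary estimates. The plan is to establish the covering statement first, use it to localize and prove the exact intersection count, and then convert that count into the cardinality bound by a simple encoding argument.

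For part (1), I would argue by measure. Each $P_i$ is closed, so $\bigcup_{i=1}^m P_i$ is closed and its complement $U$ is open. Every polyhedron of $\mathcal{P}$ not among $P_1,\ldots,P_m$ has dimension at most $n-1$, hence Lebesgue measure zero, and there are finitely many of them; since $\mathcal{P}$ covers $\R^n$, the open set $U$ lies inside this finite union of null sets, so $\mu(U)=0$. An open set of measure zero is empty, giving $\bigcup_{i=1}^m P_i=\R^n$.

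For part (2), fix $F\in\mathcal{P}$ with $\dim F=k$ and a point $x\in\mathring F$. First I would note that any $G\in\mathcal{P}$ containing $x$ must contain $F$: since $F\cap G$ is a common face of $F$ and $G$ containing the relative-interior point $x$ of $F$, we get $F\cap G=F$. Choosing $\varepsilon$ small enough that $B(x,\varepsilon)$ meets only polyhedra containing $x$, the local structure of $\mathcal{P}$ at $x$ is governed by the polyhedra containing $F$; passing to tangent cones at $x$ and quotienting by the lineality space $\operatorname{span}(F-F)$ of dimension $k$, these project to the maximal cones of a complete, pointed fan $\Sigma$ in $\R^{d}$ with $d=n-k$, in which the image of $F$ is the origin. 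It then suffices to find $d+1$ maximal cones of $\Sigma$ whose intersection is $\{0\}$ and translate back. I would first show $\bigcap_i C_i=\{0\}$ over all maximal cones: if $v\neq 0$ lay in every $C_i$, then $-v\in C_j$ for some $j$, and pointedness of $C_j$ forces $v=0$. Next, take a minimal subfamily $C_{i_1},\ldots,C_{i_q}$ with intersection $\{0\}$; minimality provides for each $j$ a witness $v_j\neq 0$ with $v_j\in\bigcap_{l\neq j}C_{i_l}$ but $v_j\notin C_{i_j}$. The key claim is that $v_1,\ldots,v_q$ are linearly independent: given a nontrivial dependence $\sum_j c_j v_j=0$, splitting into positive and negative parts and using that every summand on each side lies in $C_{i_l}$ for each index $l$ different from its own shows the common value $w$ lies in $\bigcap_l C_{i_l}=\{0\}$, so $w=0$; isolating an index $j_0$ with $c_{j_0}>0$ then yields $v_{j_0}\in C_{i_{j_0}}$, a contradiction. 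Hence $q\le d$. Finally a complete pointed fan in $\R^d$ has at least $d+1$ maximal cones (otherwise the $\le d$ functionals defining their enclosing halfspaces fail to positively span $\R^d$, producing a direction outside every cone), so the minimal subfamily can be enlarged to exactly $d+1=n-k+1$ cones without changing the intersection $\{0\}$. Translating through the quotient, the corresponding $n-k+1$ full-dimensional polyhedra intersect in a common face containing $F$ of dimension $k$, which must therefore equal $F$.

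For part (3), the bound $m\le|\mathcal{P}|$ is immediate from $\full(\mathcal{P})\subseteq\mathcal{P}$. For the upper bound, part (2) assigns to each $F\in\mathcal{P}$ a subset $S_F\subseteq\full(\mathcal{P})$ with $|S_F|=n-\dim F+1\le n+1$ and $\bigcap_{P\in S_F}P=F$; this assignment is injective, since the subset determines its intersection and hence $F$. Thus $|\mathcal{P}|\le\sum_{j=1}^{n+1}\binom{m}{j}$, and the standard estimate $\sum_{j=0}^{n+1}\binom{m}{j}\le\left(\frac{em}{n+1}\right)^{n+1}$ (for $m\ge n+1$) yields the claimed strict inequality after discarding the $j=0$ term. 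The main obstacle is squarely part (2): pinning down the \emph{exact} count $n-k+1$ requires the reduction to a complete pointed fan and the Radon-type linear-independence argument for the minimal subfamily, after which parts (1) and (3) are comparatively routine.
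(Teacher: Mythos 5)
Your parts (1) and (3) are correct and essentially coincide with the paper's own arguments: (1) is the same closedness/dimension observation, and (3) is exactly the paper's counting, namely the injective assignment $F \mapsto S_F$ provided by part (2), the bound $|\mathcal{P}|\le\sum_{j=1}^{n+1}\binom{m}{j}$, and the Stirling-type estimate (you are right, and more careful than the paper, to flag the hypothesis $m\ge n+1$). Your part (2), however, takes a genuinely different route from the paper — the paper first shows every $F\in\mathcal{P}$ is a face of some $P_i$, handles $k=n-1$ by a limiting argument showing each facet of $P_i$ equals $P_i\cap P_j$ for some $j\ne i$, and then writes a $k$-face as an intersection of $n-k$ facets of $P_i$ — and it is precisely in your part (2) that there is a genuine error.

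The ``key claim'' — that the witnesses $v_1,\dots,v_q$ of a minimal subfamily are linearly independent, hence $q\le d$ — is false. Take $d=1$ and the complete pointed fan with maximal cones $C_1=[0,\infty)$ and $C_2=(-\infty,0]$: the only subfamily with intersection $\{0\}$ is $\{C_1,C_2\}$, so $q=2>d$, and the witnesses $v_1<0<v_2$ are of course dependent. The flaw in your derivation is a sign error: from $w=0$ you get $c_{j_0}v_{j_0}=-\sum_{j\in J_+\setminus\{j_0\}}c_jv_j$, i.e. $-v_{j_0}\in\operatorname{cone}\{v_j: j\in J_+\setminus\{j_0\}\}\subseteq C_{i_{j_0}}$; this yields $-v_{j_0}\in C_{i_{j_0}}$, not $v_{j_0}\in C_{i_{j_0}}$, and a pointed cone can perfectly well contain $-v_{j_0}$ while excluding $v_{j_0}$, so no contradiction arises. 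The argument is repairable: what the corrected computation does show is that any nontrivial dependence must have full support — if some index $l^*$ lies outside the support, then $C_{i_{l^*}}$ contains $v_{j_0}$ (since $l^*\ne j_0$) and also $-v_{j_0}$ (a nonnegative combination of vectors $v_j$, $j\in J_+\setminus\{j_0\}$, all lying in $C_{i_{l^*}}$), contradicting pointedness. Hence every $q-1$ of the witnesses are linearly independent, giving $q\le d+1$ rather than $q\le d$; this still suffices for your scheme, because if $q=d+1$ you are done, and if $q\le d$ you enlarge exactly as you propose (your argument that a complete pointed fan has at least $d+1$ maximal cones is fine). With that repair, and with the glossed localization facts spelled out (each tangent cone at $x\in\mathring F$ has lineality space exactly $\operatorname{span}(F-F)$ because $F$ is the minimal face containing $x$; the tangent cones cover $\R^n$; and $\bigcap_j P_{i_j}$ is a common face squeezed between $F$ and $\operatorname{aff}(F)$), your localization-plus-Radon route becomes a valid alternative to the paper's facet-by-facet construction.
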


\begin{proof}


Suppose $\bigcup_{i=1}^m P_i \neq \R^n$, and consider $x \in \R^n\backslash \left( \bigcup_{i=1}^m P_i \right)$, then there exists some $\varepsilon > 0$ such that $\mathcal{B}(x,\varepsilon) \subseteq \R^n\backslash \left( \bigcup_{i=1}^m P_i \right)$ since $\bigcup_{i=1}^m P_i$ is closed as it is a finite union of polyhedra. This leads to a contradiction since $\mathcal{P}$ covers $\R^n$ but a finite union of polyhedra with dimension at most $n-1$ cannot cover $\mathcal{B}(x,\varepsilon)$. This proves part 1.


For part 2., we first observe that $F \in \mathcal{P}$ if and only if $F$ is a face of some full-dimensional polyhedra in $\mathcal{P}$. One direction follows from the definition of a polyhedral complex. For the other direction, consider any $F \in \mathcal{P}$. Using part 1., 
$$F = \R^n \cap F = \left(\bigcup_{i=1}^m P_i\right) \cap F = \bigcup_{i=1}^m \left(P_i \cap F\right). $$
By definition of a polyhedral complex, $P_i \cap F$ is a face of $F$, $\forall i \in \{1, \ldots, m\}$. The above equality thus implies that $F$ is a finite union of some faces of $F$. This implies that one of these faces must be $F$ itself, i.e., there exists $i \in \{1, \ldots, m\}$ such that $P_i \cap F = F$. Also, by definition $F = P_i \cap F$ is a face of $P_i$, which proves that $F$ is a face of some full-dimensional polyhedra in $\mathcal{P}$.


Next consider any $k$ dimensional polyhedron $F \in \mathcal{P}$. By the argument above, there exists $i \in \{1, \ldots, m\}$ such that $F$ is a face of $P_i$. When $k=n$, the result is trivial with $F = P_i$. We now show the result for $k=n-1$. Let $\langle a, x \rangle \leq b$ be a facet defining inequality for $P_i$ corresponding to $F$. Let $x_0$ be a point in the relative interior of $F$. Consider the sequence $x_0 + \frac{1}{n}a$ and observe that no point in this sequence is contained in $P_i$. Since this is an infinite sequence, there must exist $j\in \{1, \ldots, m\}$ with $j\neq i$ such that $P_j$ contains infinitely many points from this sequence. Taking limits over this subsequence and using the fact that $P_j$ is closed, we obtain that $x_0 \in P_j$. Thus, $x_0 \in P_i \cap P_j$ and $P_i \cap P_j$ is a common face of $P_i$ and $P_j$. However, since $x_0$ is in the relative interior of the facet $F$, this common face must be $F$. Thus we are done for the case $k=n-1$. For any $k \leq n-2$, the face $F$ must be the intersection of $n-k$ distinct facets of $P_i$. By the argument above, each of these $n-k$ facets is given by the intersection of $P_i$ with another full-dimensional polyhedron in the complex. Since these are distinct facets, the corresponding full-dimensional polyhedra must be distinct. Including $P_i$, the intersection of these $n-k+1$ polyhedra equals the intersection of these $n-k$ facets of $P_i$, which is precisely $F$. This finishes the proof of part 2.

The first inequality of 3. follows from the fact that $P_1, \ldots,  P_m \in \mathcal{P}$. From 2., every polyhedron in the complex of dimension $k$ must be the intersection of $n-k+1$ distinct full-dimensional polyhedra. 
Therefore, ${m \choose n-k+1}$ gives an upper bound for the number of all the $k$ dimensional polyhedra in $\mathcal{P}$. Now we can give an upper bound for $\vert \mathcal{P} \vert$:

$$\vert \mathcal{P} \vert \leq \sum_{k=0}^n { m \choose n-k+1} < \left( \frac{e m}{n+1} \right)^{n+1},$$
where the second inequality comes from using Stirling's approximation.
\end{proof}}

\noindent \textbf{Proofs of {Proposition~\ref{th:upperboundSLTn}} and Theorem~\ref{th:a.e.SLTn}}

\begin{proof}[Proof of {Proposition~\ref{th:upperboundSLTn}}]
    {By definition, the smallest polyhedra complex compatible with $f$, $\mathcal{P}_f^*$, has $p$ full-dimensional polyhedra. Then the} construction in Theorem~\ref{theorem:1} implies $f$ can be {computed} by a linear threshold NN with {size at most $3 \vert \mathcal{P}_f^* \vert \leq 3\left( \frac{e p}{n+1} \right)^{n+1}$}, where the inequality holds by part 3 of Lemma~\ref{lemma:BoundForPolyhedraComplex}.
    

    
\end{proof}


\begin{proof}[Proof of Theorem~\ref{th:a.e.SLTn}] {Let $\full(\mathcal{P}_f^*) = \{P_1,\dots,P_p\}$. Define $\alpha_i$ as the value of $f$ within the interior of $P_i$ for $i \in \{1,\dots,p\}$. Part 2 of Lemma~\ref{lemma:BoundForPolyhedraComplex} implies that there are at most ${p \choose 2}$ polyhedra of dimension $n-1$ in $\mathcal{P}_f^*$. Consequently, the first hidden layer requires no more than ${p \choose 2}$ neurons to associate the corresponding hyperplanes, along with an additional neuron for computing $\mathbbm{1}_{\R^n}$ to reverse the halfspaces. In the second hidden layer, we employ $p$ neurons to compute the functions $\mathbbm{1}_{\tilde{P}_1},\dots,\mathbbm{1}_{\tilde{P}_p}$, satisfying $\mu(\{x:\mathbbm{1}_{\tilde{P}_i}(x) \neq \mathbbm{1}_{P_i}(x)\}) = 0$ for $i \in \{1, \ldots, p\}$, and the corresponding weights after the second hidden layer are set to $\alpha_1,\dots,\alpha_p$. This construction yields a linear threshold NN of size no more than ${p \choose 2}+1+p=\frac{p(p+1)}{2}+1$, computing a function that is consistent with $f$ within the interior of each $P_i$, and thus equals to $f$ almost everywhere.}

\end{proof}

\medskip

\noindent{\textbf{Proof of Theorem~\ref{th:more_general_complexity}}}

\medskip 

\noindent Consider a neural network with $k$ hidden layers and widths $w=(w_1, \ldots, w_k)$ that implements a function in $\LTw$. The output of any neuron on these data points is in $\{0,1\}$ and thus each neuron can be thought of as picking out a subset of the set $X:= \{x_1, \ldots, x_D\}$. Lemma~\ref{lemma:4} provides a way to enumerate these subsets of $X$ in a systematic manner.

\begin{definition}
For any finite subset $F \subseteq \R^n$, a subset $F'$ of $F$ is said to be {\em linearly separable} if there exists $a\in \R^n$, $b\in \R$ such that $F' = \{x\in F: \langle a, x \rangle + b > 0\}$. 
\end{definition}

The following is a well-known result in combinatorial geometry~\cite{matousek2013lectures}.

\begin{theorem}\label{th:hyp-part-count}
For any finite subset $F \subseteq \R^n$, there are at most $2{|F|\choose n}$ linearly separable subsets.
\end{theorem}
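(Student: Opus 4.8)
The plan is to associate to each linearly separable subset a \emph{canonical} separating hyperplane that is pinned down by at most $n$ points of $F$, and then to count such canonical hyperplanes. Write $m = |F|$ and suppose $F' = \{x \in F : \langle a,x\rangle + b > 0\}$. First I would reduce to the case of a \emph{strict} separation that avoids all of $F$: since $F'$ is defined by a strict inequality, any point of $F$ lying on the bounding hyperplane $\{x : \langle a,x\rangle + b = 0\}$ is already excluded from $F'$, so translating this hyperplane by an arbitrarily small amount into the positive halfspace removes all points of $F$ from it without changing $F'$. Because $F$ is finite, a small enough shift keeps every point of $F'$ strictly positive, so every linearly separable subset is cut off by an open halfspace whose bounding hyperplane misses $F$ entirely.

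Starting from such an $H$ strictly separating $F'$ from $F \setminus F'$, I would produce a canonical representative by a translate-then-rotate sweep: translate $H$ toward $F \setminus F'$ until it first meets a point of $F \setminus F'$, then rotate it about the accumulating contact set, always keeping $F'$ strictly on the positive side and $F \setminus F'$ on the closed negative side, until the contact points affinely span an $(n-1)$-dimensional flat, i.e.\ $H$ passes through $n$ affinely independent points of $F$. The trace $F'$ is then exactly the set of points of $F$ lying strictly on the positive side of this spanned hyperplane, so $F'$ is recovered from the pair (spanned hyperplane, choice of positive side). Since a hyperplane spanned by $n$ affinely independent points among $m$ is determined by an $n$-subset, there are at most $\binom{m}{n}$ such hyperplanes with two sides each, which yields the bound $2\binom{m}{n}$; the degenerate traces that do not pin down a full spanning set (in particular $\emptyset$ and $F$) are absorbed as lower-order terms.

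The main obstacle is making this sweep and the resulting assignment rigorous. The delicate points are: ensuring the rotation can always be continued to $n$ affinely independent contacts \emph{without} ever pushing a point of $F'$ onto $H$ (which would violate strictness), guaranteeing that every accumulated contact point comes from $F \setminus F'$ so that it may legitimately be excluded from the trace, and checking that the induced map $F' \mapsto (\text{hyperplane},\text{side})$ is well defined and injective on the canonical representatives. Controlling the configurations in which $F'$ or $F \setminus F'$ is too small or lies in special position is precisely where the weak-versus-strict subtlety of the definition must be tracked, and is the source of the corrections to the clean count $2\binom{m}{n}$. A more mechanical alternative is to dualize: lift each $x \in F$ to the central hyperplane $\{(a,b) : \langle a,x\rangle + b = 0\}$ in parameter space $\R^{n+1}$ and bound the number of distinct sign patterns by the number of cells of this arrangement. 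I would use the geometric sweep as the primary argument, since it produces the stated $\binom{m}{n}$ form most transparently.
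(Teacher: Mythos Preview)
The paper does not prove this statement; it is quoted as a well-known result from combinatorial geometry with a citation to Matou\v{s}ek, so there is no in-paper argument to compare against.

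Your translate-then-rotate sweep is the standard technique for results of this type, and the dual hyperplane-arrangement alternative you sketch is equally classical. The genuine gap is in the sentence ``the degenerate traces \dots\ are absorbed as lower-order terms.'' They cannot be absorbed, because the bound $2\binom{|F|}{n}$ as literally stated is false. Four points in convex position in $\R^2$ have $14$ linearly separable subsets while $2\binom{4}{2}=12$; five points in general position in $\R^3$ have $30$ (all $2^5$ subsets except the two halves of the unique Radon partition) while $2\binom{5}{3}=20$. The obstruction you correctly identify---that when $|F\setminus F'|<n$ the sweep cannot accumulate $n$ affinely independent contacts on the negative side---is not a technicality: those subsets are precisely what makes the count exceed $2\binom{|F|}{n}$. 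Carried out carefully, your argument (or the dual arrangement count) yields Cover's bound $2\sum_{k=0}^{n}\binom{|F|-1}{k}$, which in general strictly exceeds $2\binom{|F|}{n}$.

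For the paper's purposes only the asymptotic order $O(|F|^{n})$ is used, and any of these bounds delivers that; but no argument can establish the theorem exactly as written, so you should either prove the correct inequality and note that it suffices, or replace the constant by the appropriate one.
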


By considering the natural mapping between subsets of $\{1, \ldots, m\}$ and $\{0,1\}^m$, we also obtain the following corollary.

\begin{corollary}\label{th:lin-sep-count}
For any $m \geq 1$, there are at most $2{2^{m}\choose m}$ linearly separable collections of subsets of $\{1, \ldots, m\}$. In other words, $|\mathcal{L}_m| \leq 2{2^{m}\choose m}$.
\end{corollary}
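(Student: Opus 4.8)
The plan is to reduce the counting of linearly separable collections to the counting of linearly separable subsets of a specific finite point set, and then invoke Theorem~\ref{th:hyp-part-count} directly. First I would set up the natural correspondence: to each subset $A \subseteq \{1, \ldots, m\}$ associate its characteristic vector $\chi_A \in \{0,1\}^m$, where $(\chi_A)_s = 1$ if $s \in A$ and $(\chi_A)_s = 0$ otherwise. This is a bijection between the $2^m$ subsets of $\{1, \ldots, m\}$ and the $2^m$ points of the finite set $F := \{0,1\}^m \subseteq \R^m$.

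The key observation is that the defining condition for linear separability translates exactly under this correspondence. For any $\alpha \in \R^m$ and $\beta \in \R$ we have $\sum_{s \in A} \alpha_s + \beta = \langle \alpha, \chi_A \rangle + \beta$. Hence a collection $\mathcal{A}$ is linearly separable in the sense of the collection definition, i.e., $A \in \mathcal{A}$ if and only if $\sum_{s \in A} \alpha_s + \beta > 0$ for some fixed $\alpha, \beta$, precisely when the image set $\{\chi_A : A \in \mathcal{A}\}$ is a linearly separable subset of $F$ in the sense of the finite-set definition, witnessed by the very same $\alpha$ and $\beta$. This makes the map $\mathcal{A} \mapsto \{\chi_A : A \in \mathcal{A}\}$ an injection from $\mathcal{L}_m$ into the family of linearly separable subsets of $F$.

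Applying Theorem~\ref{th:hyp-part-count} to this $F$, for which $|F| = 2^m$ and the ambient dimension is $n = m$, bounds the number of linearly separable subsets of $F$ by $2\binom{2^m}{m}$. Combined with the injectivity of the correspondence, this yields $|\mathcal{L}_m| \leq 2\binom{2^m}{m}$, which is the claim. I do not anticipate a genuine obstacle, since the argument is essentially a change of variables; the only point requiring care is to verify that the two notions of linear separability coincide under the characteristic-vector identification with the \emph{same} separating hyperplane, so that the bound transfers cleanly and no collection is either lost or double-counted.
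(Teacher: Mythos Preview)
Your proposal is correct and matches the paper's approach exactly: the paper simply remarks that the corollary follows by considering the natural mapping between subsets of $\{1,\ldots,m\}$ and $\{0,1\}^m$, which is precisely the characteristic-vector identification you spell out. You have just made explicit the one-line observation the paper leaves implicit, and your care about the same $(\alpha,\beta)$ witnessing both notions of separability is warranted and correct.
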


\begin{algorithm}
	\small
	\caption{Algorithm to solve~\eqref{eq:problem_3_layer_HDNN} for linear threshold NNs with $n$ inputs, $k$ hidden layers and widths $w = (w_1, \ldots, w_k)$. }
	\label{Algorithm:heavisideopt}
	\begin{algorithmic}[1]
	  \STATE \textbf{Input} Dimension $n$, Dataset $(x_i, y_i)_{i=1}^D$, Integers $w_1, \ldots, w_k$ \;
	  
	   \STATE \textbf{Output} Solution of Problem \ref{eq:problem_3_layer_HDNN} \;
	
	\STATE Define $X = (x_1, \ldots, x_D) \subseteq \R^n$. Let $\mathcal{H}$ be the collection of linearly separable subsets of $X$.
	
	\STATE Initialize $OPT = +\infty$, $SOL = \emptyset$.
	 
    \FOR{each choice of $H_1, \ldots, H_{w_1} \in \mathcal{H}$, $\mathcal{A}^i_{1}, \ldots, \mathcal{A}^i_{w_i} \in \mathcal{L}_{w_{i-1}}$ for $i=2, \ldots, k$}
    
    \STATE Define $Y^1_j$ to be any halfspace of $\R^n$ such that $X \cap Y^1_j=H_j$ for $j=1, \ldots, w_1$.
    
      \STATE Set the weights of the neurons in the first layer to compute $Y^1_j$ for $j=1, \ldots, w_1$.
    
    \FOR{i = 2 to k}
    \FOR{j=1 to $w_i$}
    
    \STATE   Define $Y^i_j = \bigcup_{A \in \mathcal{A}^i_j} \left[ (\, \bigcap_{s \in A}Y^{i-1}_s \, ) \cap \, ( \bigcap_{s \notin A}(Y^{i-1}_s)^{c}  \, ) \right]$.
    
    \STATE Set the weights of neuron $j$ of layer $i$ in accordance with $\mathcal{A}^{i}_{j}$ to compute $Y^i_j $.
    
    \ENDFOR
    \ENDFOR
    
    \STATE For each $i=1, \ldots, D$ and $j=1, \ldots, 
    w_k$,  compute $\delta_{ij} \leftarrow \mathbbm{1}_{Y^{k}_{j}}(x_i)$, using the neural network constructed so far.

    \STATE Solve the convex minimization problem in the decision variables $\gamma_1, \ldots, \gamma_{w_k}\in \R$:
    
     $$\textrm{temp} = \min_{\gamma \in \mathbb{R}^{w_k}} \;\;\sum_{i=1}^D \; \ell \left( \sum_{j=1}^{w_k} \gamma_j \delta_{ij}, y_i \right ){.}$$
    
    \STATE If $\textrm{temp} < OPT$, then update $OPT = \textrm{temp}$ and $SOL$ to be the current neural network with weights computed in the previous steps. 
    \ENDFOR

	\end{algorithmic}
\end{algorithm}

\begin{proof}[Proof of Theorem~\ref{th:more_general_complexity}]
Algorithm~\ref{Algorithm:heavisideopt} solves~\eqref{eq:problem_3_layer_HDNN}. The correctness comes from the observation that a recursive application of Lemma~\ref{lemma:4} shows that the sets $Y^k_1, \ldots, Y^k_{w_k}$ computed by the algorithm, intersected with $X$ are all possible subsets of $X$ computed by the neurons in the last hidden layer. The $\gamma_1, \ldots, \gamma_{w_k}$ are simply the weights of the last layer that combine the indicator functions of these subsets to yield the function value of the neural network on each data point. The convex minimization problem in line 13 finds the optimal $\gamma_j$ values, for this particular choice of subsets $Y^k_1, \ldots, Y^k_{w_k}$. Selecting the minimum over all these choices solves the problem.

The outermost for loop iterates at most  $O(D^{w_1 n}\cdot 2^{\sum_{i=1}^{k-1}{w_i^{2} w_{i+1}}})$
times using Theorem~\ref{th:hyp-part-count} and Corollary~\ref{th:lin-sep-count}. The computation of the $\delta_{ij}$ values in Step 14 can be done in time $\textup{poly}(D, w_1, \ldots, w_k)$. The convex minimization problem in $w_k$ variables and $D$ terms in the sum can be solved in $\textup{poly}(D,w_k)$ time. Putting these together gives the overall running time.
\end{proof}

We now show that the exponential dependence on the dimension $n$ in Theorem \ref{th:more_general_complexity} is actually necessary unless P=NP.
We consider the version of~\eqref{eq:problem_3_layer_HDNN} with single neuron and show that it is NP-hard with a direct reduction.



\begin{theorem}\label{theorem:OneNodeLearning} (NP-hardness). The One-Node-Linear-Threshold problem, i.e., Problem \ref{eq:problem_3_layer_HDNN} with $k=1$ and $w_1 = 1$,  is NP-hard when the dimension $n$ is considered part of the input. This implies in particular that Problem \ref{eq:problem_3_layer_HDNN} is NP-hard when $n$ is part of the input.
\end{theorem}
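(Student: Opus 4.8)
The plan is to first put the single-neuron hypothesis class into a transparent combinatorial form, and then give a direct reduction from a canonical NP-complete problem. With $k=1$ and $w_1=1$, every function in the class has the shape $x \mapsto \gamma\,\mathbbm{1}_{\{\langle a,x\rangle + b > 0\}} + c$, with free parameters $a \in \R^n$ and $b,\gamma,c \in \R$. I would restrict attention to instances with binary labels $y_i \in \{0,1\}$ and take $\ell$ to be the absolute-value loss, which is convex and for which the theorem guarantees the optimum is attained exactly. For a fixed halfspace $H = \{x : \langle a,x\rangle + b > 0\}$, the two predicted values, on $H$ and on its complement, are the independent free reals $\gamma+c$ and $c$; minimizing absolute loss over each side assigns it its majority label, so the optimal objective for that $H$ equals the number of points whose label disagrees with the majority on their side. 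Minimizing over $H$ thus computes exactly the minimum number of points a halfspace classifier (with free orientation) can misclassify. Hence the decision version of the ERM problem is equivalent to: \emph{given labeled points in $\R^n$ and an integer $K$, is there a halfspace misclassifying at most $K$ of them?}

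For the reduction itself I would start from a satisfiability-type NP-complete problem (for instance MAX-2-SAT, or equivalently the NP-hard problem of finding a maximum feasible subsystem of strict linear inequalities, i.e.\ the densest open hemisphere problem). Given variables $x_1,\dots,x_N$ and clauses $C_1,\dots,C_m$, I would introduce one coordinate per variable and two kinds of points. \emph{Variable-gadget} points, built from scaled basis vectors $\pm e_i$ and each repeated $m+1$ times, would have labels whose only cheap classification forces the sign of $a_i$ to encode the truth value of $x_i$; the repetition makes misclassifying any gadget point strictly costlier than the total achievable clause penalty, so every near-optimal halfspace must respect the intended encoding. A single \emph{clause point} per clause would be placed so that its correct classification requires at least one of its literals to be true once the encoding is respected. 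Calibrating the coordinate magnitudes and the bias $b$, the number of misclassified clause points equals the number of unsatisfied clauses, so a halfspace with at most $K=m-s$ errors exists if and only if some assignment satisfies at least $s$ clauses.

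Chaining the two steps shows the single-neuron ERM decision problem is NP-hard with $n$ part of the input; since this is precisely the special case $k=1,\ w_1=1$ of Problem~\eqref{eq:problem_3_layer_HDNN}, the general training problem inherits the hardness. I expect the main obstacle to be the rigidity of the gadget, namely the ``only if'' direction: a low-error halfspace must be forced to correspond to a genuine Boolean assignment. I must rule out geometric cheating, where a ``diagonal'' halfspace satisfies variable gadgets only fractionally or exploits the free constants $\gamma,c$ to escape the intended $\{0,1\}$ two-level structure. Getting the repetition counts and the relative scaling of variable points versus clause points right, so that any halfspace beating the combinatorial optimum would contradict the weighting, is the delicate technical core of the argument.
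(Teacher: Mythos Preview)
Your first step---showing that single-neuron ERM with binary labels and a convex loss is equivalent to finding a halfspace with minimum misclassifications---is correct and is essentially what the paper does, though you analyze the role of the output-layer parameters $\gamma,c$ more carefully than the paper, which tacitly fixes $\gamma=1,c=0$ when it writes the reduced objective as $\frac{1}{D}\sum_i(\mathbbm{1}_{\{\langle\alpha,x_i\rangle+\beta>0\}}-y_i)^2$.

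Where you diverge is in the second step. The paper does not build a gadget reduction from a SAT variant; it simply invokes an existing result, namely that the problem \textsc{MinDis(Halfspaces)}---given labeled integer points and a bound $k$, decide whether some hyperplane misclassifies at most $k$ of them---is NP-complete (H\"offgen--Simon--Van~Horn, 1995, Theorem~3.1). Once that is cited, the observation that this decision problem is a special case of single-neuron ERM finishes the argument in one line. Your proposed MAX-2-SAT reduction with weighted variable gadgets and clause points is exactly the kind of argument that underlies such classical hardness results (densest hemisphere, maximum feasible linear subsystem, agnostic halfspace learning), so it would work if carried out carefully, but it reinvents a wheel the paper is content to cite. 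The trade-off is the usual one: your route is self-contained and exposes the combinatorial source of the hardness; the paper's route is a two-sentence proof but leans on the literature.

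The ``geometric cheating'' concern you flag is real and is precisely why from-scratch proofs of halfspace-learning hardness are somewhat delicate; if you pursue your route, the standard fix is to make the variable gadgets sufficiently heavy (as you suggest) and to use antipodal pairs so that any halfspace must split each pair, forcing a clean sign choice per variable.
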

\begin{proof} We here use a result of \cite[Theorem 3.1]{hoffgen1995robust}, which showed that the following decision problem is NP-complete.
~\\

\noindent\textit{MinDis(Halfspaces):
Given disjoint sets of positive and negative examples of $\mathbb{Z}^{n}$ and a bound $k\geq 1$, does there exist a separating hyperplane which leads to at most $k$ misclassifications?}
~\\

MinDis(Halfspaces) is a special case of~\eqref{eq:problem_3_layer_HDNN} with a single neuron: given data points $x_1, \cdots, x_D \in \mathbb{R}^{n}$ and $y_1, \cdots, y_D \in \{0,1\}$, compute $\alpha \in \mathbb{R}^{n}, \beta \in \mathbb{R}$ that minimizes $\frac{1}{D} \sum_{i=1}^{D} (\mathbbm{1}_{\{\langle \alpha, x_i\rangle + \beta > 0\}} - y_i)^{2}$.
\end{proof}





\section{Shortcut linear threshold {NNs}} \label{sec:ShortcutNN}
\subsection{Representability of shortcut linear threshold {NNs}}


\begin{proof}[Proof of Theorem~\ref{th:SLTnrepresentability}]
{Arora et al. \cite{arora2018understanding} proved that $\ReLU_n = \CPWL_n$, and it's clear that $\CPWL_n\subseteq \PWL_n$ and $\SLT_n(2)\subseteq \PWL_n$, so it remains to prove that $\PWL_n \subseteq \SLT_n(2)$. Let $f \in \PWL_n$ be an arbitrary piecewise linear function, and let $\mathcal{P}_f^* = \{P_1,\dots,P_m\}$. By definition, $f(x) = \sum_{i=1}^{m}(a_i^\top x + c_i)\mathbbm{1}_{\mathring{P}_i}(x)$ for some $a_i \in \R^n,\ c_i \in \R$, where $i \in \{1,\dots,m\}$. By the proof of Lemma~\ref{lemma:3}, we are able to compute $\mathbbm{1}_{\mathring{P_i}}$ by a linear combination of the outputs of some neurons in the second hidden layer. In other words, let $x^{(2)}= [\mathbbm{1}_{X_1^1}(x),\dots,\mathbbm{1}_{X_{\ell_1}^{1}}(x),\dots,\mathbbm{1}_{X^{m}_{\ell_{m}}}(x)]^\top$ be the output of the second hidden layer such that for every $i\in \{1, \ldots, m\}$, we have $\mathbbm{1}_{\mathring{P_i}}(x) = \sum_{j=1}^{\ell_i} \alpha_j^{(i)} \mathbbm{1}_{X_j^i}(x)$, where $\ell_i \in \N_+,\ \alpha_j^{(i)} \in \R$, and $\mathbbm{1}_{X^i_j}$ are computed by the individual neurons in the second hidden layer. Note that the number of neurons in the second hidden layer is $w_2 = \sum_{k=1}^{m}\ell_k$. 

Now consider introducing a shortcut connection with 
$$A = [\alpha_{1}^{(1)}a_1,\dots,\alpha_{\ell_1}^{(1)}a_1,\dots,\alpha_{\ell_m}^{(m)}a_m] \in \R^{n \times w_2}$$ 
and $b = [\alpha_1^{(1)} c_1,\dots,\alpha_{\ell_1}^{(1)}c_1,\dots,\alpha_{\ell_m}^{(m)}c_m]^\top \in \R^{w_2}$, then the output of this shortcut NN is given by:
\begin{align*}
    \langle A^{\top} x + b,x^{(2)}\rangle &= 
    \left\langle
    \begin{bmatrix}\alpha_{1}^{(1)}(a_1^\top x+c_1)\\\vdots\\\alpha_{\ell_1}^{(1)}(a_1^\top x+c_1)\\\vdots\\\alpha_{\ell_m}^{(m)}(a_m^\top x+c_m)\end{bmatrix},
    \begin{bmatrix}\mathbbm{1}_{X_1^1}(x)\\\vdots\\\mathbbm{1}_{X_{\ell_1}^{1}}(x)\\\vdots\\\mathbbm{1}_{X^{m}_{\ell_{m}}}(x)\end{bmatrix}
    \right\rangle\\
    &= \sum_{i=1}^m \sum_{j=1}^{l_i}(a_i^\top x+c_i) \alpha_j^{(i)} \mathbbm{1}_{X^i_j}(x)\\
    &= \sum_{i=1}^m (a_i^\top x+c_i) \sum_{j=1}^{l_i} \alpha_j^{(i)} \mathbbm{1}_{X^i_j}(x)\\
    &= \sum_{i=1}^{m} \left(a_i^\top x + c_i \right) \cdot \mathbbm{1}_{\mathring{P}_{i}}(x) \\
    &= f(x).
\end{align*}
This establishes that $\PWL_n \subseteq \SLT_n(2)$, completing the proof. 
}

\end{proof}

\subsection{Adapting the ERM algorithm for shortcut linear threshold NNs}\label{sec:ERMforSLT}
{
We now consider solving the ERM problem for a $\R^n \rightarrow \R$ shortcut LT NN with $k$ hidden layers, and $w = (w_1,\dots,w_k)$. Upon comparing with Algorithm~\ref{Algorithm:heavisideopt}, we note that the difference between our shortcut linear threshold NNs and the linear threshold NNs solely resides in the presence of a shortcut connection in the former across the piecewise regions. Except for the output layer, all other layers in the two networks can be analogously compared. Consequently, the algorithmic process concerning linear threshold NNs can be seamlessly incorporated, except for those steps involving the output layer. Hence, in the global optimization algorithm, the only difference arises in Step 15:}
$$\textrm{temp} = \min_{\gamma\in \mathbb{R}^{w_k},a_j\in \R^{n}\;\forall j\in[w_k]} \;\;\sum_{i=1}^D \; \ell \left( \sum_{j=1}^{w_k} (a_j^\top x_i + \gamma_j) \delta_{ij}, y_i \right),$$
which can be resolved in poly$(D,(n+1)w_k)$ time.

{\section{Discussions and open questions}\label{sec:5}
\subsection{Linear threshold NNs}\label{sec:5.1.1}}

{
Results from Boolean circuit complexity can be used to show that our general construction in Theorem~\ref{theorem:1} may produce 2 hidden layer networks that are exponentially larger than networks that use 3 hidden layers. 

\begin{example}\label{Continuous parity}
Consider the piecewise constant function $p_n(x): \R^n \rightarrow \R$ defined as $$p_n(x) = \sigma\left(\prod_{i=1}^n x_i\right){,}$$ where $\sigma$ is the threshold activation function. $p_n$ has $\Omega(2^n)$ pieces implying that the construction from Theorem~\ref{theorem:1} has size $\Omega(2^n)$. However, $p_n$ can be represented by a linear threshold NN with $3$ hidden layers and $\mathcal{O}(n)$ size.
\end{example}

\begin{example}[Braid arrangement \cite{wachs2006poset}]\label{Braid arrangement} 
Consider a $\R^n \rightarrow \R$ function 
$$B_n(x) = \sigma \left(\prod_{1\leq i < j \leq n} (x_j-x_i) \right) {,}$$ where $\sigma$ is the threshold activation function. $B_n$ has $\Omega(n!)$ pieces implying that the construction from Theorem~\ref{theorem:1} has size $\Omega(n!)$. However, $B_n$ can be represented by a linear threshold NN with $3$ hidden layers and $\mathcal{O}(n^2)$ size.
\end{example}
The {\em parity} function is defined as the function $\text{par}: \{0,1\}^n \to \{0,1\}$ as $\text{par}(x) = \sum_{i=1}^n x_i \; (\textrm{mod} \;\;2)$. It is well-known that the parity function can be implemented by a linear threshold NN with 2 hidden layers and $\mathcal{O}(n)$ nodes, when restricted to 0/1 inputs~\cite{muroga1971threshold,paturi1994approximating}. Observe that $p_n(x) = \text{par}(\sigma(x))$ where we apply the threshold activation $\sigma$ component wise on $x\in \R^n$. This proves that $p_n$ can be computed by a linear threshold NN with 3 hidden layers and $\mathcal{O}(n)$ size. Each orthant of $\R^n$ is a piece of $p_n$ since any adjacent orthant has a different value. Similarly, if we define $\text{diff}(x) \in \R^{n(n-1)/2}$ by $\text{diff}(x)_{ij} = x_i - x_j$ for $1 \leq i<j \leq n$, then $B_n(x) = \text{par}(\sigma(\text{diff}(x)))$. The fact that $B_n$ has $n!$ pieces comes from results on the so-called {\em Braid arrangement}~\cite{wachs2006poset}. 

Lower bounds for the number of {\em wires} used in a linear threshold NN have also been studied in the Boolean circuit complexity literature~\cite{impagliazzo1997size,paturi1994approximating,kane2016super}. The number of wires is the number of connections between neurons when the neural network is viewed as a directed acyclic graph. This amounts to the number of nonzero entries of the matrices involved in the affine transformations in Definition~\ref{def:DNN}. Tight bounds that are superlinear but subquadratic in $n$ are known for the wire complexity of any constant depth linear threshold NN implementing the parity function. These results also imply that there is a $\mathcal{O}(\log\log n)$ depth NN that implements the parity function with $\mathcal{O}(n)$ wires. See~\cite{impagliazzo1997size,paturi1994approximating} for details. These constructions can be used to implement $p_n$ and $B_n$. 

It is not clear if the functions in Example~\ref{Continuous parity} and Example~\ref{Braid arrangement} can be implemented by 2 hidden layers networks of polynomial size, or whether there exist superpolynomial lower bounds on the size of such networks. In the first case, we will know that our construction in this paper is suboptimal. In the second case, we will have our gap result for 2 versus 3 hidden layers.}

{ \subsection{Shortcut linear threshold NNs} \label{sec:5.1.2}

Shortcut linear threshold NNs (SLT NNs) may bear a superficial resemblance to Residual Neural Networks (ResNet), mainly due to the incorporation of shortcut or skip connections in both architectures. However, ResNet, a significant advancement in deep learning pioneered by He et al. \cite{he2016deep}, employs skip connections to enable a straightforward addition of skipped layers. This design strategy aims to combat issues such as the vanishing gradient problem prevalent in deep networks. In contrast, SLT NNs use a similar shortcut concept but apply it differently, producing output through the dot product of the input layer (after linear transformation) and the final hidden layer. This distinction signifies a shift from ResNet's engineering focus to a more theoretical perspective in SLT NNs, aiming to augment the representability of neural networks by transitioning from piecewise constant to piecewise linear function representation.}

{In the realm of machine learning, the concept of VC-dimension, named after Vapnik and Chervonenkis, acts as a measure of the capacity, or complexity, of a hypothesis class, essentially characterizing the sample complexity needed to learn from that hypothesis class. This makes it a fundamental tool in learning theory.
%
%
As part of our motivation for this novel type of shortcut connections, we aim to compare the complexity of SLT NNs and ReLU NNs using this dimensionality measure. This comparison aids in gauging the ability of the networks to learn and generalize from data when using SLT NNs. 
A fundamental result on the VC-dimension of parametrized classes of functions~\cite[Theorem 8.4]{anthony1999neural} can be used to show that the VC-dimension of a SLT NN with $n$ inputs and $k$ hidden layers is $\mathcal{O}\left((W+nw_k)^2\right)$, where $W$ corresponds to the number of learning parameters not including the shortcut connection, $w_k$ represents the neurons in the last hidden layer; $nw_k$ designates the additional parameters associated with the linear transformation $A$ in the shortcut connections (note that the parameters corresponding to the shift $b$ are already included in $W$ because they are present in the original linear threshold NN without the shortcut). For a similarly structured ReLU NN devoid of the shortcut connection, the VC-dimension is $\Theta \left( Wk \log W \right)$ (see \cite{bartlett_nearly-tight_2017}). Therefore, the discrepancy in the VC-dimension between comparable architectures is not dramatic, and the ability of shortcut linear threshold functions to represent discontinuous piecewise linear functions can potentially give them a competitive edge over ReLU NNs.}

{Furthermore, we can construct a globally optimal ERM algorithm for shortcut linear threshold NNs across all architectures, an accomplishment not yet attained for ReLU networks beyond specific restricted structures~\cite{arora2018understanding, goel2017reliably,goel2018learning,goel2019learning,dey2020approximation,boob2022complexity,GKMR21,froese2022computational}.}

{\subsection{Complexity of neural network training}

There has been a recent strand of work around the computational complexity of training neural networks provably to global optimality. It has been known for decades that the complexity of neural network training with classical activation functions is hard, and recently this insight has been extended to ReLU activations as well; see~\cite{abrahamsen2021training,froese2022computational,goel2020tight,bertschinger2022training,dey2020approximation}. On the positive side, fixed-parameter tractable algorithms and approximation algorithms have been designed~\cite{froese2022computational,boob2022complexity,goel2017reliably,goel2018learning,goel2019learning,arora2018understanding}. However, these algorithms are restricted to architectures with a single hidden layer, or with very similar architectures to single hidden layers. As mentioned in the Introduction of this paper, our training algorithm for Linear Threshold NNs and Shortcut Linear Threshold NNs works for any architecture and has running time polynomial in the number of data points, assuming the size of the network and the data dimension are constants. To the best of our knowledge, a training algorithm with global optimality guarantees for general architectures that has fixed parameter tractability has not appeared in the literature, except for an interesting study by Bienstock, Munoz and Pokutta~\cite{bienstock2018principled}. They formulate the training problem as a linear programming problem which solves the problem to $\epsilon$-accuracy in time that is {\em linear} in the number of data points and polynomial in $\frac{1}{\epsilon}$, assuming the input dimension and the network architecture are fixed. This is done via a discretization of the neural network parameter space and input space. For a general convex loss, our algorithm will also have to be content with $\epsilon$-approximate solutions, since this is the best one can do for minimizing general convex functions. However, our running time is polynomial in $\log(1/\epsilon)$, in contrast to $\frac{1}{\epsilon}$. Moreover, for certain loss functions like the $\ell_1$ or $\ell_\infty$, our algorithm will indeed be exact, because the convex optimization problem becomes a linear programming problem, but the algorithm in~\cite{bienstock2018principled} will still need to rely on discretizations, leading to an approximation. On the other hand, the linear dependence of the algorithm in~\cite{bienstock2018principled} on the number of data points is much better than our algorithm. It should be noted that their analysis does not formally apply to the linear threshold activation functions, since $x\mapsto {\mathbbm{1}}_{\{x>0\}}$ is not Lipschitz continuous, which is an assumption needed in their work.}

{\subsection{Open questions}\label{sec:5.2}}

On the structural side, we need a better understanding of the depth and size tradeoff for (shortcut) linear threshold NNs. In particular, can we show it is possible to represent certain functions with 3 more hidden layers using an exponentially smaller number of neurons compared to what is needed with 2 hidden layers? For instance, in the case of ReLU activations, there exist functions such that going from 2 to 3 hidden layers brings an exponential (in the dimension $n$) gain in the size of the neural network~\cite{eldan2016power}. We think it is an interesting open question to determine if such families of functions exist for linear threshold networks.

We also suspect that one does not need to go beyond 3 hidden layers to improve on the size bounds, if one is prepared to ignore zero measure sets. {This conjecture is formulated based on our empirical observations with these neural networks.}


{
\begin{conjecture}
     For every natural number $n\in \N$, there exists $C(n) \in \R_+$ such that for any $f \in \PWC_n$,
    \begin{equation*}
        \min_{N \in \mathcal{N}_\mu(\textup{LT}_n(3),f)}|N| \leq  C(n) \cdot \min_{k \in \N_+}\min_{N_k \in \mathcal{N}_\mu(\textup{LT}_n(k),f)} |N_k| .
    \end{equation*}
\end{conjecture}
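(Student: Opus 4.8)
The plan is to reduce the conjecture to a geometric depth-reduction statement and then attack that. Fix $f \in \PWC_n$ and let $N^*$ be a size-optimal linear threshold network computing $f$ up to a set of measure zero, with depth $k^*$ and size $s^* = \min_{k}\min_{N_k \in \mathcal{N}_\mu(\textup{LT}_n(k),f)}|N_k|$; write $w_1 \le s^*$ for the width of its first hidden layer. By Corollary~\ref{lemma:breakpointsbetweenlayers} every breakpoint of $f$ lies in the union of the $w_1$ hyperplanes associated with the first layer, so $f$ is, almost everywhere, constant on each open cell of the arrangement $\mathcal{A}$ of these hyperplanes; equivalently $f = g\circ s_{\mathcal{A}}$ a.e., where $s_{\mathcal{A}}(x)\in\{0,1\}^{w_1}$ is the sign pattern of $x$ and $g$ sends each realizable pattern to the corresponding value of $f$. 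The first hidden layer of the sought depth-$3$ network simply reuses these $w_1\le s^*$ hyperplanes, so everything reduces to realizing $g$ on the realizable cells with two further threshold layers of size $O_n(s^*)$, followed by the affine output map.

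The naive fallback is insufficient, and pinpointing why identifies the real difficulty. Theorem~\ref{th:a.e.SLTn} already gives a two-hidden-layer network of size $\tfrac12 p(p+1)+1$ computing $f$ a.e., where $p=|f|$, and one extra pass-through layer turns it into a depth-$3$ network of the same asymptotic size. However, $p$ can be as large as the number of cells of $\mathcal{A}$, namely $\Theta((s^*)^n)$, so this route only yields size $O_n((s^*)^{2n})$ --- polynomial, but far from the required $O_n(s^*)$. Hence the construction must be driven by the \emph{compressed} description $s^*$ rather than by the (potentially exponentially larger) piece count $p$; functions such as the orthant-parity $p_n$ of Example~\ref{Continuous parity}, where $s^*=O(n)$ while $p=\Omega(2^n)$, are exactly the instances that make the conjecture nontrivial.

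Accordingly, the core step is a genuine depth reduction: the top $k^*-1$ hidden layers of $N^*$ form a threshold circuit $\mathcal{C}\colon\{0,1\}^{w_1}\to\R$ of total size at most $s^*$, and we must re-express the function it computes (on realizable inputs only, and up to measure zero) by a depth-$2$ threshold circuit of size $O_n(s^*)$. I would attempt this by an inductive layer-peeling argument: show that a single topmost hidden layer can be absorbed into the two layers beneath it at a multiplicative cost depending only on $n$, exploiting that (i) by Lemma~\ref{lemma:4} every neuron computes a union of cells of $\mathcal{A}$, (ii) only $O((w_1)^n)$ of the $2^{w_1}$ sign patterns correspond to nonempty cells (a standard bound on hyperplane arrangements, in the spirit of Theorem~\ref{th:hyp-part-count}), and (iii) we are free to modify the circuit on any measure-zero set. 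The hoped-for lemma is that each level set of $f$, although a union of possibly many cells, decomposes into $O_n(s^*)$ convex polyhedra; these can be produced as indicator functions in the second hidden layer via Lemma~\ref{lemma:2} and assembled in the third.

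The decisive obstacle is precisely this decomposition. Size-preserving depth reduction for general threshold circuits is false (depth-hierarchy phenomena forbid it), so any proof must crucially use both the bounded input dimension $n$ and the measure-zero relaxation, and their interaction is exactly what is unclear. Bounding the realizable cells by $O((s^*)^n)$ controls the \emph{domain} of $g$ but gives no handle on the \emph{size} of a shallow circuit realizing it; conversely, expressing the level sets of $f$ as a union of only $O_n(s^*)$ polyhedra --- rather than $O_n((s^*)^n)$ cells --- is not available from the arrangement alone and must somehow be read off from the deep network's own parameters. I expect this polyhedral-decomposition step to be the crux, and the absence of a general mechanism for it is why the statement is posed as a conjecture, supported only by empirical evidence, rather than proved.
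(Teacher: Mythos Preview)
The paper does not prove this statement: it is explicitly labeled a \emph{Conjecture} and the surrounding text says only that it is ``formulated based on our empirical observations with these neural networks.'' There is therefore no proof in the paper to compare against.

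Your write-up correctly recognizes this. You do not claim to prove the conjecture; you lay out a natural reduction (reuse the first-layer hyperplanes, then depth-reduce the Boolean part), show why the naive fallback via Theorem~\ref{th:a.e.SLTn} only gives $O_n((s^*)^{2n})$ rather than $O_n(s^*)$, and identify the essential obstacle as the lack of a size-preserving depth-reduction for the Boolean portion that exploits the bounded ambient dimension and the measure-zero slack. This is consistent with the paper's own discussion in Section~\ref{sec:5.1.1}, where it is left open whether functions like $p_n$ of Example~\ref{Continuous parity} admit polynomial-size $2$-hidden-layer representations, and where the desired ``$2$ versus $3$'' gap is singled out as unresolved. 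Your diagnosis that general depth-hierarchy results for threshold circuits block a naive reduction, so any proof must genuinely use the fixed $n$ and the a.e.\ relaxation, is well taken; that is precisely why the statement remains a conjecture.
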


A similar conjecture regarding representing the continuous functions for shortcut linear threshold neural networks can be naturally extended in an analogous manner.

\begin{conjecture}
    For every natural number $n\in \N$, there exists $C(n) \in \R_+$ such that for any $f \in \CPWL_n$, 
    \begin{equation*}
        \min_{N \in \mathcal{N}(\textup{SLT}_n(3),f)}|N| \leq  C(n) \cdot \min_{k \in \N_+}\min_{N_k \in \mathcal{N}(\textup{SLT}_n(k),f)} |N_k| .
    \end{equation*}
\end{conjecture}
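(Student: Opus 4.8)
The plan is to reduce this conjecture for continuous piecewise linear functions and shortcut networks to the corresponding (still open) \emph{preceding} conjecture, the one for $\PWC_n$ and ordinary linear threshold networks, taken in a vector-valued form. The bridge is the structural decomposition behind Theorem~\ref{th:SLTnrepresentability}: the output of any shortcut network is $\langle A^\top x + b, x^{(k)}\rangle$, so by Corollary~\ref{lemma:breakpointsbetweenlayers} its breakpoints are governed entirely by the linear threshold ``skeleton'' computing $x^{(k)}$, and on each region where $x^{(k)}$ is constant the function is a single affine map. Thus the combinatorial content of an $\SLT_n(k)$ representation of $f$ lives in its skeleton, while the affine shortcut data $(A,b)$ merely performs a linear read-out over the final hidden layer.

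First I would take a near-optimal $\SLT_n(k)$ network computing $f$, of size $s = \min_{k}\min_{N_k}|N_k|$, and isolate its skeleton: a linear threshold network of size at most $s$ whose final layer produces indicators $\mathbbm{1}_{X_1},\dots,\mathbbm{1}_{X_{w_k}}$ that, exactly as in the proof of Theorem~\ref{th:SLTnrepresentability}, linearly span the pieces $\mathbbm{1}_{\mathring P_i}$ of $f$ on full-dimensional cells. The goal then becomes a vector-valued depth-three compression: rebuild, with three hidden layers and total size $C(n)\cdot s$, a linear threshold network whose final layer again linearly spans (up to a set of measure zero) the functions $\mathbbm{1}_{\mathring P_i}$. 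Reattaching a shortcut with the re-indexed coefficients $\alpha^{(i)}_j a_i$ and $\alpha^{(i)}_j c_i$ reconstructs $\sum_i (a_i^\top x + c_i)\mathbbm{1}_{\mathring P_i}(x)$; since $f\in\CPWL_n$, its values on the measure-zero boundaries are forced by continuity, so agreement almost everywhere upgrades to agreement everywhere and the $\mathcal N$ versus $\mathcal N_\mu$ distinction disappears. This continuity step is precisely why the statement is phrased with $\mathcal N$ and why it should follow from the piecewise-constant case.

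The heart of the argument, and the step I expect to be the genuine obstacle, is the vector-valued depth-three compression of the skeleton, which is essentially the preceding conjecture for $\PWC_n$. The geometry should be exploited here: the first hidden layer defines an arrangement of $w_1\le s$ hyperplanes whose number of full-dimensional cells is only $O(s^n)$ by part~3 of Lemma~\ref{lemma:BoundForPolyhedraComplex}, so the function of the sign pattern that the deeper layers compute is defined on polynomially many (in $s$) cells rather than on all $2^{w_1}$ Boolean inputs. One can always collapse the network to two hidden layers via Lemma~\ref{lemma:3}, but that construction pays one neuron per cell and incurs an $O(s^{n+1})$ blow-up (cf.\ Proposition~\ref{th:upperboundSLTn}); the entire point of allowing a third layer is to compute this combinatorial function of the sign pattern compactly, in the spirit of the $\mathcal O(n)$-size two-hidden-layer parity circuits used for $p_n$ and $B_n$ in Examples~\ref{Continuous parity} and~\ref{Braid arrangement}.

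I would attempt the compression by iteratively merging consecutive layers using Lemma~\ref{lemma:4}, controlling the width growth at each merge by the polynomial cell bound of Lemma~\ref{lemma:BoundForPolyhedraComplex} rather than by the trivial $2^{w}$ count. The difficulty is that a constant-in-$s$ blow-up (a factor depending only on $n$) amounts to a threshold-circuit depth-reduction statement: simulating an arbitrary-depth linear threshold network by a depth-three one with only a dimension-dependent size overhead. For unrestricted Boolean threshold circuits this kind of collapse is neither known nor expected, so any proof must genuinely use that the inputs are sign patterns of a low-dimensional hyperplane arrangement, of which only $O(s^n)$ are realizable. Whether that geometric restriction is enough to force the collapse is, I believe, the crux that keeps the statement at the level of a conjecture; establishing it for the piecewise-constant case would, by the reduction above, immediately yield the continuous shortcut case.
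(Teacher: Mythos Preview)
This statement is a \emph{conjecture} in the paper, appearing in the open-questions section; the paper offers no proof and explicitly presents it as unresolved, alongside the analogous Conjecture for linear threshold networks and $\PWC_n$. Your proposal is likewise not a proof but a reduction strategy, and you are transparent about this in your final paragraph.

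The reduction you sketch---extract the linear-threshold skeleton of a near-optimal $\SLT_n(k)$ representation, compress that skeleton to three hidden layers, reattach the shortcut, and invoke continuity to upgrade almost-everywhere agreement to exact agreement---is a natural way to link the two conjectures, and the continuity step correctly explains why the $\mathcal N$ versus $\mathcal N_\mu$ distinction should evaporate for $f\in\CPWL_n$. Two points, however, deserve care. First, the claim that the final-layer indicators $\mathbbm{1}_{X_1},\ldots,\mathbbm{1}_{X_{w_k}}$ ``linearly span the pieces $\mathbbm{1}_{\mathring P_i}$'' is not correct as stated: those $w_k$ indicators span a space of dimension at most $w_k$, whereas $f$ can have up to $O(w_1^n)$ pieces. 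What you actually have, and all you need, is that $f$ is an \emph{affine}-coefficient combination $\sum_j (a_j^\top x + b_j)\mathbbm{1}_{X_j}(x)$, which is immediate from the definition of the shortcut output. Second, and more substantively, the ``vector-valued depth-three compression'' you invoke is genuinely stronger than the preceding conjecture as written. That conjecture bounds the size of a three-layer network computing a \emph{single} $g\in\PWC_n$; applying it independently to each of the $w_k$ skeleton outputs (each computable by the original skeleton of size $\le s$) yields total size at most $w_k\cdot C(n)\cdot s \le C(n)\cdot s^2$, not $C(n)\cdot s$. To recover a linear bound you would need the $w_k$ compressed networks to share their first two layers, and nothing in the scalar conjecture promises that. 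So your final sentence overclaims: establishing the $\PWC_n$ conjecture would not ``immediately yield'' the $\CPWL_n$ shortcut case via this route; a simultaneous/vector-valued strengthening is still required, and that is the remaining gap.
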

}

On the algorithmic side, we solve the empirical risk minimization problem to global optimality with running time that is polynomial in the size of the data sample, assuming that the input dimension and the architecture size are fixed constants. The running time is exponential in terms of these parameters (see Theorem~\ref{th:more_general_complexity}). While the exponential dependence on the input dimension cannot be avoided unless $P=NP$ (see Theorem~\ref{theorem:OneNodeLearning}), another interesting open question is to determine if the exponential dependence on the architectural parameters is really needed, or if an algorithm can be designed that has complexity which is polynomial in both the data sample and the architecture parameters. A similar question is also open in the case of ReLU neural networks~\cite{arora2018understanding}.



\section{Acknowledgements}
The first and third authors gratefully acknowledge support from Air Force Office of Scientific Research (AFOSR) grant FA95502010341 and National Science Foundation (NSF) grant CCF2006587.

\bibliographystyle{spmpsci}   
 \bibliography{references}


\end{document}